\documentclass[10pt]{article}

\usepackage[preprint]{tmlr}

\usepackage[utf8]{inputenc}
\usepackage{adjustbox}
\usepackage{algorithm}
\usepackage{algpseudocode}
\usepackage{amsfonts}
\usepackage{amsmath}
\usepackage{amssymb}
\usepackage{amsthm}
\usepackage{bbm}
\usepackage{booktabs}
\usepackage{graphicx}
\usepackage{makecell}

\usepackage{mathtools}
\usepackage{float}
\usepackage{xcolor}
\usepackage{xspace}
\usepackage{subcaption}
\usepackage{tikz}
\usepackage{forest}
\usetikzlibrary{fit, backgrounds}
\usepackage{url}
\usepackage{hyperref}
\hypersetup{hidelinks}

\algrenewcommand{\algorithmiccomment}[1]{%
  \hfill\makebox[0.34\linewidth][l]{\footnotesize\textcolor{black!70}{$\triangleright$~#1}}%
}

\newtheorem{proposition}{Proposition}
\newtheorem{theorem}{Theorem}
\newtheorem{assumption}{Assumption}
\newtheorem{corollary}{Corollary}

\newcommand{\cX}{\mathcal{X}}
\newcommand{\cY}{\mathcal{Y}}

\newcommand{\cD}{\mathcal{D}}
\newcommand{\cH}{\mathcal{H}}

\newcommand{\cR}{\mathcal{R}}

\newcommand{\R}{\mathbb{R}}

\newcommand{\E}[1]{\mathbb{E}\left[#1\right]}

\newcommand{\V}[1]{\textrm{Var}\left[#1\right]}

\renewcommand{\Pr}[1]{\mathbb{P}\left(#1\right)}

\newcommand{\Ind}[1]{\mathbb{I}\left[#1\right]}

\renewcommand{\cal}{\textrm{cal}}
\newcommand{\train}{\textrm{train}}

\newcommand{\ourmethod}{\texttt{LoBoost}\xspace}

\title{\ourmethod: Fast Model-Native Local Conformal Prediction for Gradient-Boosted Trees}

\author{
\name Vagner Silva Santos \\
\addr Department of Statistics, Federal University of São Carlos, São Carlos, São Paulo, Brazil\\
\addr Institute of Mathematics and Computer Science, University of São Paulo, São Carlos, São Paulo, Brazil
\AND
\name Victor Coscrato \\
\addr Department of Statistics, Federal University of São Carlos, São Carlos, São Paulo, Brazil
\AND
\name Luben M. C. Cabezas \\
\addr Department of Statistics, Federal University of São Carlos, São Carlos, São Paulo, Brazil\\
\addr Institute of Mathematics and Computer Science, University of São Paulo, São Carlos, São Paulo, Brazil\\
\addr Université Grenoble Alpes, Inria, CNRS, Grenoble INP, LJK, France
\AND
\name Rafael Izbicki \\
\addr Department of Statistics, Federal University of São Carlos, São Carlos, São Paulo, Brazil
\AND
\name Thiago R. Ramos \\
\addr Department of Statistics, Federal University of São Carlos, São Carlos, São Paulo, Brazil
}

\begin{document}

\maketitle

\begin{abstract}
Gradient-boosted decision trees are among the strongest off-the-shelf
predictors for tabular regression, but point predictions alone do not quantify
uncertainty. Conformal prediction provides distribution-free marginal
coverage, yet standard split conformal uses a single global residual quantile
and can adapt poorly to heteroscedasticity. We propose \ourmethod{}, a
model-native local conformal method that reuses the fitted ensemble's leaf
structure to define a multiscale partition of the feature space. Each input is
represented by the sequence of leaves it visits along the boosting path, and
matching leaf prefixes define nested groups in which residual quantiles are
estimated locally. By reusing the predictive structure already learned by the
model, \ourmethod{} requires no auxiliary partition or nuisance model, no
retraining, and no additional data split beyond standard conformal
calibration. Our theory connects the stability of the fitted ensemble and the
geometry of its induced cells to local residual-score homogeneity, providing
finite-sample coverage-error control and asymptotic pointwise validity as the
local calibration size grows and the cells become sufficiently homogeneous.
Experiments show competitive interval quality, low post-hoc calibration costs, and stable behavior across local
calibration-size settings.
\end{abstract}

\section{Introduction}

Gradient-boosted decision trees are among the strongest off-the-shelf predictors for tabular regression \citep{friedman2001greedy,chen2016xgboost,grinsztajn2022tree}.
In many applications, however, accurate point predictions are not enough: unobserved covariates, measurement noise, and intrinsic randomness make the response uncertain.
Practitioners therefore need  prediction intervals that adapt to each input and quantify uncertainty reliably.

Conformal prediction (CP) offers a simple and rigorous route to uncertainty quantification.
Given any predictor $g$ and an i.i.d.\ calibration set, split conformal constructs intervals $\widehat C(X)$ with finite-sample, distribution-free marginal coverage,
$\Pr{Y\in \widehat C(X)}\ge 1-\alpha$, under exchangeability \citep{vovk2005algorithmic,shafer2008tutorial,angelopoulos2023conformal,zhou2025conformal}.
In regression, the resulting interval width is controlled by an empirical quantile of calibration residuals.
Classical split conformal uses a single global residual quantile, which can yield overly wide intervals in low-noise regions and insufficient adaptivity under heteroscedasticity \citep{Lei2018,guan2023localized}.

A natural solution is \emph{partition-based} (or local) conformal prediction: estimate separate residual quantiles within regions $A\subseteq \cX$ and obtain coverage conditional on $X\in A$.
The practical challenge is choosing these regions: they must be small enough that residuals are approximately homogeneous, yet large enough to contain enough calibration points for stable quantile estimation.
Recent work addresses this trade-off by learning data-driven partitions with auxiliary trees or by defining soft neighborhoods in feature space \citep{bostrom2020mondrian,guan2023localized,martinez2024identifying,frohlich2025personalizedus,cabezas2025regression}.
A different line of work induces adaptivity by modifying the nonconformity score using additional nuisance estimates, such as conditional scales or quantiles \citep{Lei2018,romano2019conformalized,cabezasepistemic,izbicki2020flexible,izbicki2022cd,dheur2024distribution}.

While effective, local conformal methods introduce additional estimation beyond the base regressor. They typically learn an auxiliary partition or nuisance functions, or use extra splitting or cross-fitting, adding computational overhead beyond the fitted boosted model. For gradient-boosted trees, this overlooks a key fact: the trained ensemble already induces a rich, nested partition of $\cX$ via its leaf structure. We introduce \ourmethod, which reuses this model-induced partition to calibrate residual quantiles locally.
Concretely, we represent each input $x$ by the sequence of leaves it visits along the ensemble,
$
\boldsymbol{\ell}_T(x) = \bigl(\ell_1(x),\ldots,\ell_T(x)\bigr),
$ and define a local region around $x$ by requiring agreement on the first $k$ trees, i.e., matching the prefix $\boldsymbol{\ell}_k(x)=\bigl(\ell_1(x),\ldots,\ell_k(x)\bigr)$ for $k\leq T.$ This yields a nested, multiscale partition---coarse when $k$ is small, increasingly refined as $k$ grows---on which we calibrate residual quantiles.

The fitted ensemble therefore supplies the locality structure used by
\ourmethod{}. Rather than learning a new partition, the method searches along
this model-induced hierarchy for a resolution that preserves local structure
while retaining enough observations for stable calibration. Figure~\ref{fig:combined_figure} illustrates the motivation for \ourmethod{} on two 1D synthetic mechanisms with $n=2000$ points (split into $50\%$ training, $25\%$ calibration, and $25\%$ testing; details in Supplementary Material~\ref{sec:ap_sim}): \ourmethod{} produces locally adaptive intervals (narrow in low-noise regions, wide in high-noise regions) and attains conditional coverage closer to the nominal $1-\alpha=0.9$ than standard split conformal/ICP.

\begin{figure*}[!h]
    \centering
    \includegraphics[width=\linewidth]{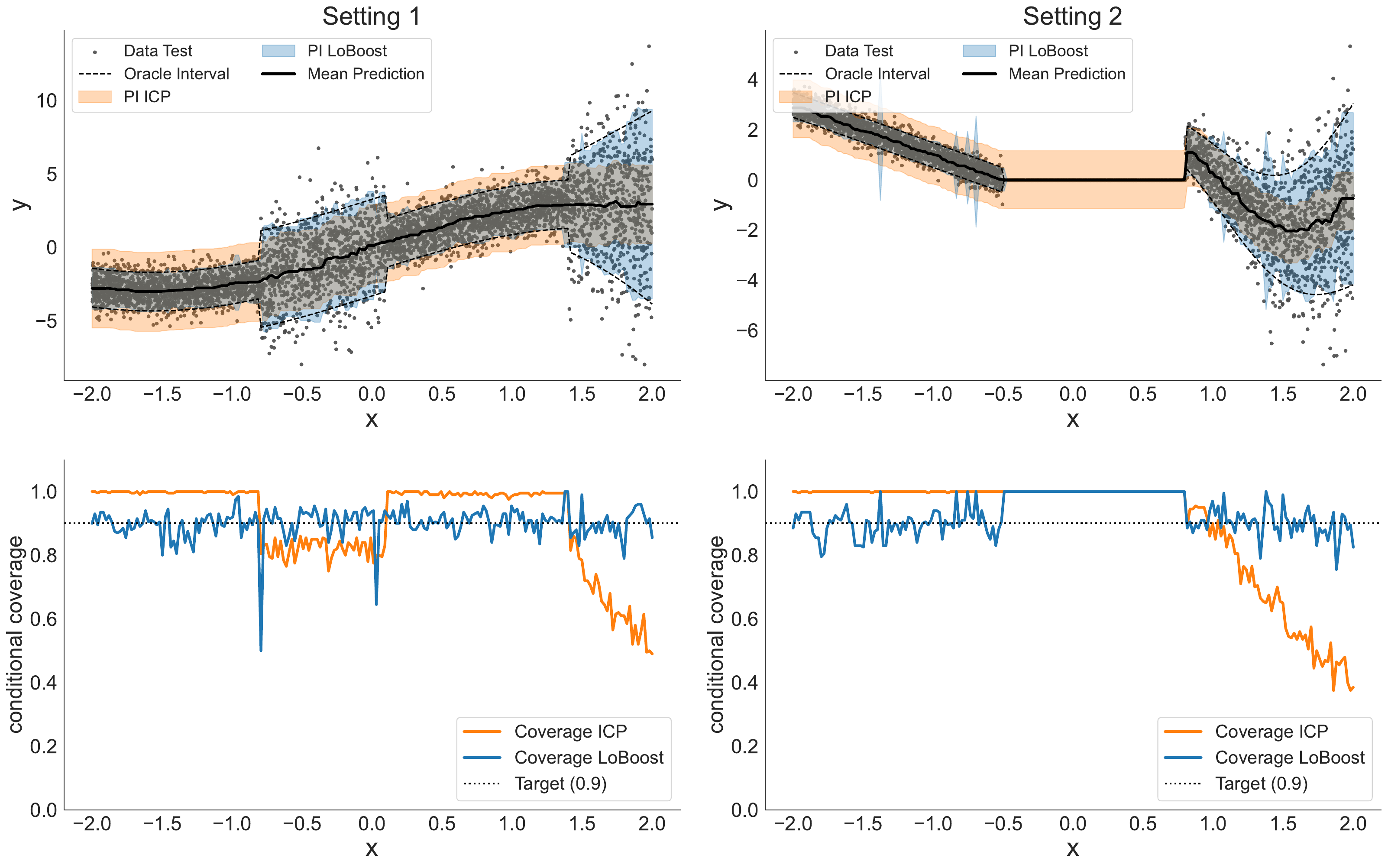}
    \caption{Comparison of prediction intervals and conditional coverage for \ourmethod and \textbf{ICP} on two synthetic mechanisms ($n=2000$, split $50/25/25$, base GBDT regressor with $100$ trees). Top: estimated intervals with the oracle interval and test data—\ourmethod tracks the oracle more closely, handling heteroscedasticity and discontinuities better than \textbf{ICP}. Bottom: empirical conditional coverage across the feature space (target $1-\alpha=0.9$)—\ourmethod stays near 0.9, while \textbf{ICP} deviates, reflecting \texttt{LoBoost}'s meaningful feature-space partitions.}

    \label{fig:combined_figure}
\end{figure*}

\paragraph{Contributions.}

\begin{itemize}
\item \textbf{Model-native local conformal prediction for boosted trees.} We use leaf-index prefixes from the fitted boosted ensemble to define a multiscale partition of X and perform conformal calibration within these cells, yielding locally adaptive intervals. Relative to standard split conformal (ICP), the natural baseline that shares the same fitted predictor and calibration split, \ourmethod{} improves interval quality (SMIS) on most datasets, though it does not uniformly match the best adaptive baseline on each dataset.

\item \textbf{Model-native, retraining-free calibration.} \ourmethod applies to an already fitted boosted model using only its leaf indices and a standard calibration set, with no auxiliary partition model, no nuisance estimator, and no retraining of the base learner. Because it reuses the model-induced partition, it also yields low post-hoc calibration cost relative to other adaptive baselines that fit an auxiliary model or partition (Table 9), while remaining substantially more expensive than plain ICP.

\item \textbf{Theory connecting boosting stability to approximate conditional coverage.}
Using deterministic bounds based on the fitted trees' leaf values and
local response-regularity conditions, we show that residual-score
distributions become asymptotically homogeneous within shrinking
boosting-induced cells. These results provide finite-sample coverage-error
control and asymptotic pointwise validity as the model-induced cells become
local and their calibration sizes grow.

\end{itemize}

\subsection{Relation to Other Work}
\label{sec:relatedwork}
\paragraph{Uncertainty quantification for boosted models (non-conformal).}
Recent approaches for endowing boosting with uncertainty quantification generally fall into two categories.
First, probabilistic frameworks like NGBoost \citep{duan2020ngboost} and PGBM \citep{sprangers2021probabilistic}
assume a conditional parametric distribution, optimizing its parameters via natural gradients to minimize a proper scoring rule.
Second, statistical inference methods, such as the work by \cite{fangstatistical2025}, rely on Central Limit Theorems
to construct intervals based on the asymptotic normality of the boosted estimator.
However, both strategies lack the rigorous finite-sample, distribution-free validity of conformal prediction, relying instead
on specific parametric forms or asymptotic approximations that may not hold in practice.
Thus, these methods are model-native but rely on a parametric response
model or an asymptotic approximation.  \ourmethod{} is model-native and its
coverage analysis requires no parametric response model: it connects the
geometry and stability of the model-induced partition to nonparametric
finite-sample coverage-error control.

\paragraph{Conformal prediction with boosting as a base learner.}
Conformal prediction provides finite-sample, distribution-free coverage for any underlying regressor, including boosted trees.
Most existing work uses boosting primarily as a strong black-box predictor inside conformal wrappers (e.g., CQR-style efficiency gains, residual scoring, or distributional regression) \citep{romano2019conformalized,chernozhukov2021distributional,xu2024development,lai2025coffeeboost}.
In contrast, we leverage the \emph{internal} structure of the boosted
ensemble---its sequence of tree partitions---to define locality for
calibration. The fitted model itself supplies a nested family of local
regions, avoiding the need to learn an auxiliary partition estimator.
In a different approach,  \citet{xie2024boosted} propose Boosted CP Intervals, a post-hoc method that boosts a conformity score to improve interval length or conditional coverage. Unlike our approach, which uses leaf-based partitions of the boosted tree for local calibration, theirs learns a new score from model outputs without using the tree structure.

\paragraph{Local and partition-based conformal prediction.}
Exact conditional coverage is generally impossible without strong assumptions, motivating local or group-conditional targets \citep{lei2014distribution,vovk2012conditional,foygel2021limits,Lei2018,Izbicki2025}.
Local CP has been studied via kernel/neighborhood weighting \citep{guan2023localized}, multivalid/group-conditional optimization \citep{jung2023batch,kiyani2024conformal}, and Mondrian-style partitions \citep{vovk2005algorithmic,bostrom2020mondrian}.
Recently, tree-based variants were employed to learn residual-driven partitions using auxiliary trees trained on a calibration set \citep{martinez2024identifying,cabezas2025regression,cabezas2025conformalcalibrationstatisticalconfidence,cabezas2025cp4sbi}.
By contrast, we reuse the base model's internal tree-induced codes to define groups, requiring no auxiliary partition estimator.

\section{Background}

We consider a dataset of size $n$, denoted by
$\cD = \{(X_i, Y_i)\}_{i=1}^n,$
where the pairs $(X_i, Y_i)$ are independent and identically distributed according to a joint distribution $P$, with $(X_i, Y_i) \in \cX \times \cY$.
We assume that $\cD$ is partitioned into a training set $\cD_{\train}$ and a calibration set $\cD_{\cal}$, with respective sizes $|\cD_{\train}| = n_{\train}$ and $|\cD_{\cal}| = n_{\cal}$.

\subsection{Conformal Prediction}

Conformal prediction constructs prediction sets with finite-sample coverage under
exchangeability \citep{vovk2005algorithmic}. We focus on \emph{split conformal} \citep{Lei2018}: train a regressor $g$ on
$\cD_{\train}$ and, after reindexing the calibration observations, calibrate on $\cD_{\cal}=\{(X_j,Y_j)\}_{j=1}^{n_{\cal}}$ using a
nonconformity score $s(x,y)$. Define calibration scores $
S_j \coloneqq s(X_j,Y_j), \  j=1,\dots,n_{\cal}.
$ Let $S_{(1)}\le\cdots\le S_{(n_{\cal})}$ be the sorted scores, define
$r_{n_{\cal}}\coloneqq\lceil(n_{\cal}+1)(1-\alpha)\rceil$, and use the extended-real conformal quantile
\[
\widehat q_{1-\alpha}
\coloneqq
\begin{cases}
S_{(r_{n_{\cal}})}, & r_{n_{\cal}}\le n_{\cal},\\[3pt]
+\infty, & r_{n_{\cal}}=n_{\cal}+1.
\end{cases}
\]
The split-conformal
prediction set is
\[
\widehat C(x)\;\coloneqq\;\{y\in\cY:\ s(x,y)\le \widehat q_{1-\alpha}\}.
\]
Under exchangeability of the calibration data and  test pair $(X,Y)$,
\[
\Pr{Y\in \widehat C(X)}\;\ge\;1-\alpha.
\]

In regression, a commonly used score is the absolute-residual score $s(x,y)=|y-g(x)|$, yielding the interval
\[
\widehat C(x)=[\,g(x)-\widehat q_{1-\alpha},\ g(x)+\widehat q_{1-\alpha}\,].
\]
This is the score we adopt throughout this work.

\paragraph{Partition-based conformal.}
To obtain locally adaptive thresholds \citep{Lei2018}, let $\Pi=\{A_1,\dots,A_K\}$ be a partition of
$\cX$ (determined without using calibration responses). For each cell $A\in\Pi$, let
\[
I_A\coloneqq\{j:\ X_j\in A\} \text{ and } M_A\coloneqq |I_A|.
\]
Conditional on $M_A=m$, define $r_m\coloneqq\lceil(m+1)(1-\alpha)\rceil$ and
\[
\widehat q_A
\coloneqq
\begin{cases}
S^A_{(r_m)}, & r_m\le m,\\[3pt]
+\infty, & r_m=m+1,
\end{cases}
\]
where $S^A_{(r_m)}$ is the $r_m$-th order statistic of $\{S_j:j\in I_A\}$.
Then set $\widehat q(x)=\widehat q_{A(x)}$ for the unique $A(x)\in\Pi$ containing
$x$, and define
$\widehat C(x)\coloneqq\{y:\ s(x,y)\le \widehat q(x)\}.
$
Under exchangeability of calibration and test data, for every cell $A$,
\[\Pr{Y\in \widehat C(X)\mid X\in A} \geq 1-\alpha.\]
With $s(x,y)=|y-g(x)|$, this gives piecewise-constant local intervals
$[\,g(x)\pm \widehat q(x)\,]$.

\subsection{Gradient Boosting}\label{sec:gb}

Gradient boosting can be interpreted as (approximate) \emph{functional gradient descent} on a risk
functional \citep{friedman2001greedy,grubb2012generalizedboostingalgorithmsconvex_edge}. We present the
squared-loss view, which is enough to motivate the sequential, coarse-to-fine structure we exploit later.

Let $(X,Y)\sim P$, with $X\in\cX$ and $Y\in\R$. Consider the squared-risk functional
\[
\mathfrak L(g)\;\coloneqq\;\frac12\,\E{(Y-g(X))^2}.
\]
A standard calculation yields the functional gradient
\[
-\nabla \mathfrak L(g)(x)=\E{Y-g(X)\mid X=x},
\]
so the negative gradient corresponds to the (population) regression residual.

Boosting builds an additive predictor by iteratively fitting a weak learner to the current residuals.
Let $\cH$ denote the weak learner class, in our case, regression trees. Given training data
$\{(X_i,Y_i)\}_{i=1}^{n_{\train}}$ and a current predictor $g_{t-1}$, define residuals
$u_i^{(t)} \coloneqq Y_i-g_{t-1}(X_i)$ and fit
\[
h_t = \arg\min_{h\in\cH}\frac1{n_{\train}}\sum_{i=1}^{n_{\train}} \bigl(u_i^{(t)}-h(X_i)\bigr)^2.
\]
The model is then updated by the standard gradient-boosting additive step
\[
g_t(x)=g_{t-1}(x)+\eta\,h_t(x),
\qquad t=1,\dots,T,
\]
starting from an initial predictor $g_0$ (e.g., a constant). Unrolling the recursion yields the usual
additive form
\begin{equation}\label{eq:boosting_definition}
g_T(x)=g_0(x)+\sum_{t=1}^T \eta\,h_t(x),
\qquad h_t\in\cH,
\end{equation}
i.e., a sequence of progressively refined corrections.
 Because each new tree is trained on the \emph{remaining} residual signal, boosting behaves like a descent
procedure: as iterations proceed, the residuals typically shrink, so later trees act as smaller corrective
refinements (especially under shrinkage $\eta$).  This coarse-to-fine behavior is the key intuition
behind defining multiscale regions using \emph{prefixes} of boosting paths in
Section~\ref{sec:ourmethod}.

\section{Our Method: \ourmethod}\label{sec:ourmethod}

Next, we describe how the internal structure of a gradient-boosted ensemble can be leveraged to construct partition-based conformal predictors.

\subsection{Boosting-Induced Partitions}\label{sec:boost_partitions}

The sequential nature of gradient boosting induces a hierarchical organization of the feature space: early trees capture coarse, global structure, while later trees introduce progressively finer refinements.
As a result, the boosting procedure naturally defines a multiscale, nested decomposition of the feature space. This viewpoint is inspired by earlier work on task-specific similarity learning and boosting-based blocking schemes \citep{shakhnarovich2005learning,ramos24a}.

Let $\ell_t(x)$ denote the leaf identifier reached by a point $x\in\cX$ in the $t$-th regression tree $h_t$, as defined in~\eqref{eq:boosting_definition}.
The boosted ensemble therefore associates each input with a sequence of leaf identifiers,
\[
\boldsymbol{\ell}_T(x)
    \;=\;
    \bigl(\ell_1(x),\,\ell_2(x),\,\ldots,\,\ell_T(x)\bigr),
\]
which provides a hierarchical encoding of the feature space induced by the boosting trajectory.
Specifically, for a resolution level $k\le T$, we say that two points $x$ and $x'$ belong to the same region at depth $k$ if they agree on the first $k$ entries of this encoding:
\[
x \sim_k x'
\quad\Longleftrightarrow\quad
\ell_t(x) = \ell_t(x') \;\;\text{for all } t \le k.
\]
We denote by
\[
\cR_k(x)
    \;=\;
    \{x'\in \cX : x' \sim_k x\},
\]
the equivalence class of $x$ at depth $k$, which we refer to as a \emph{boosting-induced region}.

The common-prefix construction is useful for local calibration
because it yields an exact cancellation along the boosting path. If
$z\in\cR_k(x)$, then $\ell_t(z)=\ell_t(x)$ for every $t\le k$. Since each regression
tree is constant within a leaf,
\[
h_t(z)=h_t(x),
\qquad t\le k.
\]
Consequently,
\begin{align*}
g_T(z)-g_T(x)
=
\eta\sum_{t=1}^T\bigl(h_t(z)-h_t(x)\bigr)=
\eta\sum_{t=k+1}^T\bigl(h_t(z)-h_t(x)\bigr).
\end{align*}
Thus, the first $k$ fitted components cancel exactly, and two points in the
same boosting-induced region can differ only through the subsequent
corrections. In boosting regimes where later trees make progressively smaller
corrections, this remaining variation is expected to be small. 

To quantify  the remaining variation at both pointwise and uniform levels we define the leaf-value distance
\[
d_{\mathrm{LV}}(x,z)
\coloneqq
\eta\sum_{t=1}^T |h_t(x)-h_t(z)|,
\]
the output range of tree $t$,
\[
B_t\coloneqq\sup_{u,z\in\cX}|h_t(u)-h_t(z)|,
\]
and the tail envelope
\[
b_{T,k}\coloneqq\eta\sum_{t=k+1}^T B_t.
\]
The distance $d_{\mathrm{LV}}(x,z)$ is a pointwise envelope on the prediction
variation accumulated along two boosting paths. The quantity $B_t$ is the
largest contribution that tree $t$ can make to the difference between two
predictions, while $b_{T,k}$ uniformly bounds the variation that remains after
the first $k$ trees have canceled. All three quantities are determined by the
realized fitted ensemble and are finite whenever its leaf values are finite.

\begin{proposition}[Leaf-value stability]
\label{proposition:leaf_value_stability}
For every $x,z\in\cX$,
\[
|g_T(x)-g_T(z)|\le d_{\mathrm{LV}}(x,z).
\]
Consequently, for the absolute-residual score $s(x,y)=|y-g_T(x)|$,
\[
|s(x,y)-s(z,y)|\le d_{\mathrm{LV}}(x,z)
\qquad\text{for every }y\in\R.
\]
Moreover, if $z\in\cR_k(x)$, then
\[
d_{\mathrm{LV}}(x,z)\le b_{T,k}.
\]
\end{proposition}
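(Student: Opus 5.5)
The plan is to prove the three claims in order, each by a short direct argument from the additive form \eqref{eq:boosting_definition} and the definitions of $d_{\mathrm{LV}}$, $B_t$ and $b_{T,k}$.

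\textbf{First claim.} Write
\[
g_T(x)-g_T(z)=g_0(x)-g_0(z)+\eta\sum_{t=1}^T\bigl(h_t(x)-h_t(z)\bigr).
\]
Here I use that $g_0$ is a constant initializer, as in Section~\ref{sec:gb}, so its contribution cancels. This is the one point needing care. If $g_0$ were a nonconstant function, the bound would require an extra term $|g_0(x)-g_0(z)|$. I will state explicitly that the constant-initialization convention is being used. Once $g_0$ cancels, the triangle inequality together with $\eta>0$ gives $|g_T(x)-g_T(z)|\le \eta\sum_t|h_t(x)-h_t(z)|=d_{\mathrm{LV}}(x,z)$.

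\textbf{Second claim.} Fix $y\in\R$. By the reverse triangle inequality,
\[
\bigl||y-g_T(x)|-|y-g_T(z)|\bigr|\le |g_T(x)-g_T(z)|.
\]
Combining this with the first claim gives $|s(x,y)-s(z,y)|\le d_{\mathrm{LV}}(x,z)$.

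\textbf{Third claim.} Take $z\in\cR_k(x)$. By the definition of $\sim_k$, $\ell_t(z)=\ell_t(x)$ for all $t\le k$. Since each tree is constant on its leaves, $h_t(z)=h_t(x)$ for $t\le k$, as already noted in the cancellation display. The first $k$ terms of $d_{\mathrm{LV}}(x,z)$ therefore vanish, leaving
\[
d_{\mathrm{LV}}(x,z)=\eta\sum_{t=k+1}^T|h_t(x)-h_t(z)|.
\]
Each remaining summand satisfies $|h_t(x)-h_t(z)|\le B_t$ by the definition of $B_t$ as a supremum over $\cX\times\cX$. Summing over $t$ yields $d_{\mathrm{LV}}(x,z)\le \eta\sum_{t=k+1}^T B_t=b_{T,k}$.

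There is no real obstacle. The only subtle points are the constant-$g_0$ convention in the first claim and ensuring that the leaf-identifier equality genuinely implies equality of the tree outputs. The latter holds because $\ell_t$ indexes the leaves of the specific tree $h_t$, and $h_t$ is constant on each leaf.
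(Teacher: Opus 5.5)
Your proposal is correct and follows essentially the same route as the paper's proof. The paper also recalls that $g_0$ is constant so it cancels, applies the triangle and reverse triangle inequalities, and then uses agreement of the first $k$ leaf identifiers together with the definition of $B_t$ to obtain the bound $b_{T,k}$.
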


Proposition~\ref{proposition:leaf_value_stability} shows that when $z\in\cR_k(x)$ and the tail envelope $b_{T,k}$ is small, the nonconformity score $s(z,y) = |y-g_T(z)|$ is approximately equal to $s(x,y)$; that is, the absolute-residual score remains approximately constant across the boosting-induced partitions.  Due to this property, this is the score we consider throughout this work.

\subsection{Local Conformal Prediction with Boosted Partitions}

We now use the cells supplied by the fitted boosting ensemble as
conformal calibration groups. The model induces an entire hierarchy of such
cells; to first isolate the calibration argument, consider a final partition
determined together with the fitted ensemble. Section~\ref{sec:practical} then
develops the choice of resolution within this hierarchy.

We let \(\mathcal F_{\mathrm{tr}} := \sigma(\cD_{\train})\) denote the
\(\sigma\)-field generated by the training data and we assume that both the
fitted predictor \(g_T\) and the entire final partition are
\(\mathcal F_{\mathrm{tr}}\)-measurable.

For $A\in\Pi$, recall that
\[
I_A\coloneqq\{i\le n_{\cal}:X_i\in A\},
\qquad
M_A\coloneqq|I_A|,
\]
and let $S_i=|Y_i-g_T(X_i)|$.  Conditional on $M_A=m$, set
\[
r_m\coloneqq\left\lceil(m+1)(1-\alpha)\right\rceil
\]
and define the local conformal cutoff by
\[
\widehat q_A
=
\begin{cases}
S^A_{(r_m)}, & r_m\le m,\\[3pt]
+\infty, & r_m=m+1,
\end{cases}
\]
where $S^A_{(r_m)}$ is the $r_m$-th order statistic of
$\{S_i:i\in I_A\}$.  The prediction interval is
\[
\widehat C(x)
=
[g_T(x)-\widehat q_{A(x)},\;g_T(x)+\widehat q_{A(x)}].
\]

\begin{theorem}[Finite-sample group-conditional validity]
\label{theorem:local_coverage}
Suppose that, conditionally on $\mathcal F_{\mathrm{tr}}$, the calibration
pairs and an independent test pair $(X_{n+1},Y_{n+1})$ are identically
distributed and exchangeable.  Then, for every cell $A\in\Pi$ with positive
probability,
\[
\Pr{
Y_{n+1}\in\widehat C(X_{n+1})
\,\middle|\,
\mathcal F_{\mathrm{tr}},\ X_{n+1}\in A
}
\ge 1-\alpha.
\]
Averaging over the cells also gives
\[
\Pr{
Y_{n+1}\in\widehat C(X_{n+1})
\,\middle|\,\mathcal F_{\mathrm{tr}}
}
\ge1-\alpha.
\]
\end{theorem}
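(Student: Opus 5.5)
Because $g_T$ and $\Pi$ are $\mathcal F_{\mathrm{tr}}$-measurable, I would work conditionally on $\mathcal F_{\mathrm{tr}}$ throughout. Under the conditional probability the score function $s(x,y)=|y-g_T(x)|$ and the cells are fixed. The plan is the standard Mondrian (group-conditional) conformal argument. First reduce to a within-cell exchangeability statement. Then apply the usual rank argument. Finally integrate out the random cell count $M_A$.

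Fix a cell $A\in\Pi$ with $\Pr{X_{n+1}\in A\mid\mathcal F_{\mathrm{tr}}}>0$. I would further condition on the event $E_{m,J}=\{I_A=J,\ X_{n+1}\in A\}$ for a fixed index set $J$ with $|J|=m$. This event is a symmetric function of the membership indicators of the pairs in $J\cup\{n+1\}$, and it fixes the indicators of the remaining calibration points.

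The key step is to show that, conditionally on $\mathcal F_{\mathrm{tr}}$ and $E_{m,J}$, the $m+1$ pairs $\{(X_i,Y_i):i\in J\}\cup\{(X_{n+1},Y_{n+1})\}$ are exchangeable. I would argue this via i.i.d.-ness given $\mathcal F_{\mathrm{tr}}$. With i.i.d. pairs, conditioning on all of them falling in $A$ makes them i.i.d. from $P(\cdot\mid X\in A)$. The pairs indexed outside $J$ are independent and are conditioned on lying outside $A$, so they do not interact. Under mere exchangeability the same holds, because the conditioning event is invariant under permutations of $J\cup\{n+1\}$. Hence the scores $S_i$, $i\in J$, together with $S_{n+1}$, are exchangeable.

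The main obstacle is making this step rigorous under the stated exchangeability assumption rather than independence. I would handle it by noting that any permutation $\sigma$ of $J\cup\{n+1\}$, extended by the identity, preserves both the joint law and the event $E_{m,J}$.

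Given exchangeability of the $m+1$ scores, the standard quantile lemma applies. The rank of $S_{n+1}$ among them is stochastically no larger than uniform on $\{1,\dots,m+1\}$, with ties broken favorably. Therefore
\[
\Pr{S_{n+1}\le S^A_{(r_m)}\mid\mathcal F_{\mathrm{tr}},E_{m,J}}\ge \frac{r_m}{m+1}\ge 1-\alpha
\]
when $r_m\le m$. When $r_m=m+1$, which includes $m=0$, we have $\widehat q_A=+\infty$ and coverage is trivially one. Since $Y_{n+1}\in\widehat C(X_{n+1})$ iff $S_{n+1}\le\widehat q_A$ on $\{X_{n+1}\in A\}$, averaging over $J$ and $m$ with weights $\Pr{E_{m,J}\mid \mathcal F_{\mathrm{tr}},X_{n+1}\in A}$ gives the first claim.

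The marginal statement then follows from the law of total probability. I would sum the cell-conditional bounds over cells $A$ weighted by $\Pr{X_{n+1}\in A\mid\mathcal F_{\mathrm{tr}}}$; cells with zero probability contribute nothing. Since $\Pi$ partitions $\cX$, these weights sum to one.
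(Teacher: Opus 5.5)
Your proposal is correct and follows the paper's proof: the same Mondrian rank argument inside a training-frozen cell, the trivial case $r_m=m+1$, then averaging over the local count and over the cells. You condition on the exact index set $\{I_A=J\}$ rather than only on $M_A=m$, and add the permutation-invariance remark; this justifies the within-cell exchangeability a little more carefully than the paper, which simply asserts it, but the argument is otherwise the same.
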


\subsection{Asymptotic Conditional Coverage}
\label{sec:asymptotic_conditional_coverage}

Theorem~\ref{theorem:local_coverage} gives exact coverage conditionally on a
cell $A$, but not conditionally on a point $X=x$.  Without additional
structure, exact distribution-free conditional coverage at every point is
impossible
\citep{vovk2012conditional,foygel2021limits,Lei2018}.  We next show that the
cell-level guarantee becomes pointwise asymptotically when the cell contains
increasingly many calibration observations and its score distribution becomes
homogeneous around $x$.

Consider a sequence of fitted predictors $g_n$ and corresponding
final model-induced partitions $\Pi_n$, each measurable with respect to the
training sigma-field $\mathcal F_{\mathrm{tr},n}$, together with independent
calibration samples $\cD_{\cal,n}$ of size $n_{\cal,n}$. Let $A_n(x)$ be the cell containing
$x$, and let $\widehat C_n$ be obtained by applying the within-cell calibration
rule above. Define
\[
p_n(x)
\coloneqq
\Pr{X\in A_n(x)\mid\mathcal F_{\mathrm{tr},n}}
\]
and measure local score heterogeneity by
\[
\Delta_n(x)
\coloneqq
\sup_{t\ge0}
\left|
\Pr{|Y-g_n(X)|\le t
\mid\mathcal F_{\mathrm{tr},n},X\in A_n(x)}
-
\Pr{|Y-g_n(x)|\le t
\mid\mathcal F_{\mathrm{tr},n},X=x}
\right|.
\]

\begin{theorem}[Asymptotic conditional coverage]
\label{theorem:conditional_coverage}
Fix $\alpha\in(0,1)$ and $x\in\cX$.  Suppose that the conditional score
distribution in $A_n(x)$ is continuous and that, as $n\to\infty$,
\[
n_{\cal,n}p_n(x)\xrightarrow{p}\infty,
\qquad
\Delta_n(x)\xrightarrow{p}0.
\]
Then the realized pointwise coverage satisfies
\[
\Pr{Y\in\widehat C_n(x)
\mid X=x,\mathcal F_{\mathrm{tr},n},\cD_{\cal,n}}
\xrightarrow{p}
1-\alpha.
\]
\end{theorem}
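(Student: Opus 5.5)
Conditionally on $\mathcal F_{\mathrm{tr},n}$, the predictor $g_n$ and the cell $A:=A_n(x)$ are fixed. Write the two conditional score distribution functions as
\[
F_n(t)\coloneqq\Pr{|Y-g_n(X)|\le t\mid\mathcal F_{\mathrm{tr},n},X\in A},
\qquad
G_n(t)\coloneqq\Pr{|Y-g_n(x)|\le t\mid\mathcal F_{\mathrm{tr},n},X=x}.
\]
The interval at $x$ is $[g_n(x)\pm\widehat q_A]$, so the realized pointwise coverage is exactly $G_n(\widehat q_A)$ (with $G_n(+\infty)=1$). By definition of $\Delta_n(x)$,
\[
|G_n(\widehat q_A)-F_n(\widehat q_A)|\le\Delta_n(x)\xrightarrow{p}0.
\]
The whole argument therefore reduces to showing $F_n(\widehat q_A)\xrightarrow{p}1-\alpha$. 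This is a statement about the within-cell empirical quantile evaluated under the within-cell distribution.

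\textbf{Step 1 (conditional i.i.d.\ structure).} Conditionally on $\mathcal F_{\mathrm{tr},n}$ and on $M_A=m$, the calibration scores $\{S_i:i\in I_A\}$ are i.i.d.\ with distribution function $F_n$. This holds because the calibration sample is independent of training and the cell is $\mathcal F_{\mathrm{tr},n}$-measurable. Moreover, $M_A\sim\mathrm{Bin}(n_{\cal,n},p_n(x))$ conditionally. Since $n_{\cal,n}p_n(x)\xrightarrow{p}\infty$, Chebyshev applied conditionally shows $M_A/(n_{\cal,n}p_n(x))\to1$ in probability, so $M_A\xrightarrow{p}\infty$. In particular, $r_m\le m$ with probability tending to one, and $\widehat q_A$ is finite.

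\textbf{Step 2 (probability integral transform).} By the assumed continuity of $F_n$, the variables $U_i=F_n(S_i)$, $i\in I_A$, are i.i.d.\ Uniform$(0,1)$ given $m$. Because $F_n$ is nondecreasing, $F_n(\widehat q_A)=U_{(r_m)}$. Hence
\[
U_{(r_m)}\sim\mathrm{Beta}(r_m,\,m-r_m+1),
\]
whose mean is $r_m/(m+1)\in[1-\alpha,\,1-\alpha+1/(m+1)]$ and whose variance is at most $1/(4(m+2))$. By Chebyshev, for every $\varepsilon>0$,
\[
\Pr{|F_n(\widehat q_A)-(1-\alpha)|>\varepsilon\mid\mathcal F_{\mathrm{tr},n},M_A=m}\le\delta_m(\varepsilon),
\]
where $\delta_m(\varepsilon)\to0$ as $m\to\infty$ and does not depend on $n$. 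This bound is distribution-free, which is what lets it survive the fact that $F_n$ changes with $n$.

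\textbf{Step 3 (de-conditioning).} Fix $\varepsilon>0$ and $K$, and bound the unconditional probability of an $\varepsilon$-deviation by
\[
\Pr{M_A<K}+\sup_{m\ge K}\delta_m(\varepsilon).
\]
The first term vanishes by Step 1 and the second can be made small by choosing $K$ large. This gives $F_n(\widehat q_A)\xrightarrow{p}1-\alpha$, and combining with the $\Delta_n$ bound yields the claim.

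\textbf{Main obstacle.} I expect the delicate point to be Step 1 together with this de-conditioning. Here $p_n(x)$ and $F_n$ are random, being $\mathcal F_{\mathrm{tr},n}$-measurable, so the argument must be done conditionally and converted back to unconditional convergence in probability. The device for this is bounded convergence applied to conditional probabilities, which are bounded by $1$: a quantity that tends to zero in probability conditionally also tends to zero in expectation.

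A secondary point is the case $r_m=m+1$ (infinite cutoff, coverage $1$). This case only arises when $m<(1-\alpha)/\alpha$, an event whose probability vanishes. Continuity of $F_n$ is used only to make $U_{(r_m)}$ exactly Beta; without it one would obtain only an inequality on one side.
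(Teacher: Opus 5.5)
Your proposal is correct and follows essentially the same route as the paper. Both arguments show $M_n(x)\xrightarrow{p}\infty$ from the conditional binomial law, then use the probability integral transform to get the $\mathrm{Beta}(r_m,m+1-r_m)$ law with a distribution-free Chebyshev bound, de-condition by splitting on $\{M_n(x)<K\}$, and transfer to the point $x$ through $\Delta_n(x)$. The only difference is cosmetic: for the count you use conditional Chebyshev plus bounded convergence, whereas the paper uses a lower-tail Chernoff bound on the event $\{n_{\cal,n}p_n(x)\ge L\}$, and both work.
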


The quantity $n_{\text{cal},n}p_n(x)$ is the conditional expected number of calibration observations in $A_n(x)$. Its divergence still allows the cell probability $p_n(x)$ to approach zero, and hence the partition to become more local, but rules out cells that shrink so quickly that their empirical quantiles remain noisy. This is a fairly reasonable assumption, since it simply requires that, on average, enough calibration points fall within each cell for the local quantile estimates to be reliable.

Note that the sequence of cells in Theorem~\ref{theorem:conditional_coverage} is generated by a sequence of prefix depths
$
A_n(x)=\cR_{n,k_n(x)}(x).
$
Thus both $p_n(x)$ and $\Delta_n(x)$ in the theorem depend on the choice of the resolution level $k_n(x)$. Appendix~\ref{appendix:boosting_sufficient_conditions} shows that, under local response regularity, shrinking cell radius and a vanishing boosting tail $b_n(x)$ are sufficient for $\Delta_n(x)\to0$. How to choose $k_n(x)$ in practice is the subject of the next section.

\section{Resolution Selection and Aggregation}
\label{sec:practical}

There are two natural ways to select the resolution $k$. A conceptually
simple approach is to reserve an auxiliary holdout sample, disjoint from
the training and calibration samples, for choosing the partition. Once
the resolution is selected, the partition is fixed and the calibration
sample is used only to estimate the local quantiles. This decouples
resolution selection from calibration and yields validity under weaker
conditions.

In practice, we use a more data-efficient variant that does not require
this additional split. Namely, the calibration sample is used both to
select the resolution and to estimate the local quantiles. This reuse of
the data introduces an additional layer of adaptivity and therefore
requires stronger conditions in our theoretical analysis. We describe
this practical selection rule next.

At resolution level $k$, the partition cells are indexed by the
prefix $\boldsymbol{\ell}_k(x)=\bigl(\ell_1(x),\ldots,\ell_k(x)\bigr)$, so their number can grow rapidly
with $k$. If the trees have depth at most $d$, the worst-case number of cells
is $O(2^{kd})$; even for decision stumps, it can be $2^k$. \ourmethod{}
therefore increases the resolution only within regions containing enough
calibration observations to support a stable local quantile.

\begin{figure}[H]
    \centering
    \resizebox{0.5\linewidth}{!}{%
        \begin{forest}
          for tree={
            draw,
            rounded corners,
            align=center,
            font=\sffamily\small,
            edge={->, >=stealth, thick, draw=gray},
            l sep=0.8cm,
            s sep=0.2cm,
          },
          root/.style={fill=blue!10, draw=blue},
          terminal/.style={fill=red!10, draw=red}
          [$\cD_{\cal}$, root
            [{$\mathbf{L}_1(x)=(0)$}
              [{$\mathbf{L}_2(x)=(0,0)$}, terminal]
              [{$\mathbf{L}_2(x)=(0,1)$}
                [{$\mathbf{L}_3(x)=(0,1,0)$}, terminal]
                [{$\mathbf{L}_3(x)=(0,1,1)$}, terminal]
              ]
            ]
            [{$\mathbf{L}_1(x)=(1)$}
              [{$\mathbf{L}_2(x)=(1,0)$}, terminal]
              [{$\mathbf{L}_2(x)=(1,1)$}, terminal]
            ]
          ]
        \end{forest}
    }
    \caption{\textbf{LoBoost partition}. For convenience, we
    consider $h_i(x)$ as decision stumps. White nodes indicate dense regions
    that satisfy the chosen local-size requirement and are therefore eligible
    for further refinement. Red nodes denote terminal regions that do not
    satisfy this requirement before aggregation. The example results in five
    terminal regions.}
    \label{fig:partition}
\end{figure}

More precisely, we extract the leaf paths
$\{\boldsymbol{\ell}_T(X_i)\}_{i=1}^{n_{\cal}}$ of the calibration
covariates. At level $k$, each active group is refined according to the
leaf identifiers of tree $k$. To ensure reliable local quantile estimation,
we define the effective local-size threshold
\begin{equation}\label{eq:nmin_def}
N_{\min}
\coloneqq
\max\Bigl\{N_{\mathrm{part}},
\lceil p_{\min}n_{\cal}\rceil\Bigr\},
\end{equation}
which combines an absolute floor $N_{\mathrm{part}}$ with a proportional
threshold $p_{\min}$. Groups with fewer than $N_{\min}$ calibration
observations become terminal, while the remaining groups proceed to the
next tree. Consequently, refinement can produce undersized terminal regions, which are consolidated during the aggregation step. Figure~\ref{fig:partition} illustrates this construction.

We aggregate raw terminal regions with fewer than $N_{\min}$
observations using the similarity encoded by their boosting paths.
Each terminal region $R$ retains the boosting path produced during
refinement. Let $\ell_t(R)$ denote its path entry for tree $t$. For two
regions $R$ and $R'$, define the weighted Hamming distance
\begin{align}
\label{eq:weightedhamming}
d_{\mathrm{WH}}(R,R')
&=
\sum_{t=1}^T
w_t\,\Ind{\ell_t(R)\neq\ell_t(R')},
&
w_t
&\coloneqq \eta B_t.
\end{align}

The range $B_t$ measures the maximum prediction variation produced by
tree $t$, so this weighting assigns a larger penalty to disagreements
in trees with greater influence on the fitted predictor. Starting from
the raw terminal regions, we repeatedly select the deepest undersized
region, breaking depth ties in favor of the smaller region, and merge
it into the closest remaining region under $d_{\mathrm{WH}}$. The
destination may itself be undersized and retains its representative
path after the merge. This process continues until no undersized
region remains.

All refinement and aggregation decisions use only the fitted model and the
calibration covariates, not their responses; calibration responses are used
only afterward to estimate the within-cell quantiles. 

Our implementation is open-source and available at \url{REDACTED}\footnote{
The public repository link is temporarily redacted and will be released upon publication.
For reproducibility, all code needed to run the experiments is included in the Supplementary Material.
}.  A key advantage of \ourmethod is computational reuse: after fitting the boosting
predictor, partition construction and calibration reuse its learned tree structure, with no retraining of the base model. Our implementation has native support for widely used Python boosting libraries, including scikit-learn's gradient boosting \citep{scikit-learn}, XGBoost \citep{chen2016xgboost}, and CatBoost \citep{prokhorenkova2019catboostunbiasedboostingcategorical}.

All the steps of our algorithm are computationally cheap:
\begin{itemize}
    \item Leaf identifiers extraction is $O(m \times T)$ at the worst case. We note that our algorithm adaptively refines the resolution level, typically yielding $k << T$, and consequently much reduced computation costs.
\item Computing $d_{\mathrm{WH}}$ for a pair of regions requires $O(k)$ operations. We compute all pairwise distances between regions, resulting in a total complexity of $O(k\,p^2)$, where $p$ is the number of regions (partitions) before merging.
\end{itemize}

After aggregation, a new observation $x$ is routed through the
refinement tree using its leaf path $\boldsymbol{\ell}_T(x)$.
The resulting raw region is mapped to its aggregated region and assigned
the corresponding local cutoff. If no raw region matches the observed
path, we fallback to the global calibration cutoff $\widehat q_{1-\alpha}$.
The complete partitioning, aggregation, and calibration procedure are
summarized in Algorithm~\ref{alg:tree_partitioning}.

    Because the partition is not fixed by the training data alone, the exact exchangeability argument of Theorem~\ref{theorem:local_coverage} does not apply directly. Instead, we establish a PAC-style finite-sample bound conditional on the calibration covariates, captured by the $\sigma$-algebra
\[
\mathfrak G_n \coloneqq \sigma\bigl(\mathcal{F}_{\mathrm{tr},n},X_1,\ldots,X_{n_{\text{cal},n}}\bigr).
\]
The bound depends on the within-cell score-distribution discrepancy,
\[
\Omega_n(x) \coloneqq \sup_{z\in A_n(x)\cap\mathcal{S}_X} \sup_{t\ge0}\bigl|F_{n,z}(t)-F_{n,x}(t)\bigr|,
\]
which appears explicitly in the theorem below.

\begin{theorem}[Validity under calibration-based selection]
\label{theorem:calibration_covariate_adaptive}
Fix $\alpha\in(0,1)$ and $x\in\mathcal{S}_X$. Suppose that, conditionally on
$\mathcal{F}_{\mathrm{tr},n}$, the calibration pairs and an independent test
pair are i.i.d., and that the pointwise score CDFs $F_{n,z}$ are continuous
for $z\in A_n(x)\cap\mathcal{S}_X$. Let $\widehat\Pi_n$ be
$\mathfrak G_n$-measurable, and construct $\widehat C_n$ by applying the
within-cell quantile rule to the same calibration sample used, through its
covariates, to select $\widehat\Pi_n$.

For every $\delta\in(0,1)$, on the event $M_n(x)\ge1$, conditionally on
$\mathfrak G_n$ and with probability at least $1-\delta$ over the calibration
responses,
\[
\left|
\operatorname{Cov}_{n,x}(\widehat C_n)-(1-\alpha)
\right|
\le
\Omega_n(x)
+\sqrt{\frac{\log(2/\delta)}{2M_n(x)}}
+\frac{2}{M_n(x)}.
\]
Consequently, if $M_n(x)\xrightarrow{p}\infty$ and $\Omega_n(x)\xrightarrow{p}0$, then $\operatorname{Cov}_{n,x}(\widehat C_n) \xrightarrow{p}1-\alpha.$
\end{theorem}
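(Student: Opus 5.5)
The plan is to condition on $\mathfrak G_n$ throughout, so that the partition $\widehat\Pi_n$, the cell $A_n(x)$, the index set $I_{A_n(x)}$ and its size $M_n(x)$ are all fixed. Under the i.i.d.\ assumption given $\mathcal F_{\mathrm{tr},n}$, and since each pair is independent of the other covariates, the calibration responses are conditionally independent given $\mathfrak G_n$, with $Y_i\sim P_{Y\mid X=X_i}$. Hence the scores $S_i=|Y_i-g_n(X_i)|$, $i\in I_{A_n(x)}$, are independent with continuous CDFs $F_{n,X_i}$. Here $\operatorname{Cov}_{n,x}(\widehat C_n)=F_{n,x}(\widehat q_{A_n(x)})$ is the realized pointwise coverage at $x$. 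The argument then sandwiches $F_{n,x}(\widehat q)$ between the empirical CDF of the within-cell scores and the nominal level. Write $M=M_n(x)$, $\widehat F_M(t)=M^{-1}\sum_{i\in I}\Ind{S_i\le t}$ and $\bar F(t)=M^{-1}\sum_{i\in I}F_{n,X_i}(t)$.

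\textbf{Step 1 (heterogeneity).} Since every $X_i$ with $i\in I$ lies in $A_n(x)\cap\mathcal S_X$ (almost surely), the definition of $\Omega_n(x)$ gives $\sup_t|\bar F(t)-F_{n,x}(t)|\le\Omega_n(x)$. This holds deterministically given $\mathfrak G_n$.

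\textbf{Step 2 (concentration).} I would apply a DKW/Massart-type bound for independent but non-identically distributed variables: $\Pr{\sup_t|\widehat F_M(t)-\bar F(t)|>\varepsilon\mid\mathfrak G_n}\le 2e^{-2M\varepsilon^2}$. Setting $\varepsilon=\sqrt{\log(2/\delta)/(2M)}$ gives the middle term. This is the main obstacle, since the classical DKW inequality with the sharp constant $2$ is stated for i.i.d.\ samples. I would first try the probability integral transform. Because each $F_{n,X_i}$ is continuous, $U_i=F_{n,X_i}(S_i)$ are i.i.d.\ uniform, but the event $\{S_i\le t\}$ becomes $\{U_i\le F_{n,X_i}(t)\}$ with $i$-dependent thresholds, so this does not reduce directly. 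The fallback is to exploit the fact that only one random point matters, $\widehat q$, a rank statistic. I would bound $\widehat F_M-\bar F$ at the two deterministic points $t^\pm$ where $\bar F(t^\pm)=(1-\alpha)\pm\varepsilon$. Hoeffding's inequality applies at each point, a union bound gives the factor $2$, and monotonicity then transfers the bound to $\widehat q$. This one-sided-quantile argument avoids a uniform DKW entirely and is the route I would commit to.

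\textbf{Step 3 (order statistic).} By the definition of $r_M=\lceil(M+1)(1-\alpha)\rceil$, and since ties have probability zero by continuity, $\widehat F_M(\widehat q)=r_M/M\in[1-\alpha,\,1-\alpha+2/M]$ whenever $r_M\le M$. If $r_M=M+1$, then $1-\alpha>1-1/(M+1)$, so coverage $1\le 1-\alpha+2/M$ and the bound holds trivially. Combining, $|F_{n,x}(\widehat q)-(1-\alpha)|\le|F_{n,x}(\widehat q)-\bar F(\widehat q)|+|\bar F(\widehat q)-\widehat F_M(\widehat q)|+|\widehat F_M(\widehat q)-(1-\alpha)|$. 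These three terms are at most $\Omega_n(x)$, the concentration term, and $2/M$ respectively, on an event of conditional probability $\ge1-\delta$. In the quantile-specific version of Step 2, the event $\widehat q\in[t^-,t^+]$ yields $\bar F(\widehat q)\in[1-\alpha-\varepsilon,1-\alpha+\varepsilon]$ after absorbing the $2/M$ rounding, which gives the same total.

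\textbf{Step 4 (consistency).} Given $M_n(x)\xrightarrow{p}\infty$ and $\Omega_n(x)\xrightarrow{p}0$, fix $\epsilon>0$ and $\delta$. The right-hand side falls below $\epsilon$ on an event whose probability tends to one. Integrating the conditional statement over $\mathfrak G_n$ gives $\Pr{|\operatorname{Cov}_{n,x}-(1-\alpha)|>\epsilon}\le\delta+o(1)$. Since $\delta$ is arbitrary, this yields convergence in probability.
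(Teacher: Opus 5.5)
Your proof is correct, but it handles the concentration step differently from the paper.

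The paper does use the probability integral transform you set aside. With $U_i=F_{n,X_i}(S_i)$ conditionally i.i.d.\ uniform, it removes the index-dependent thresholds by sandwiching $F_{n,X_i}(t)$ between the common levels $(F_{n,x}(t)-\Omega_n(x))\vee0$ and $(F_{n,x}(t)+\Omega_n(x))\wedge1$. This squeezes the within-cell empirical CDF between two values of the uniform empirical CDF. The classical i.i.d.\ DKW inequality then gives $\sup_{t\ge0}|\widehat F_M(t)-F_{n,x}(t)|\le\Omega_n(x)+\varepsilon$ in one stroke, and only the $2/M$ ceiling correction remains.

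You instead pass through the averaged CDF $\bar F$ and apply Hoeffding at two $\mathfrak G_n$-measurable points with a union bound, which reproduces the same $\log(2/\delta)$ constant. Your route is more elementary: it needs no DKW, and in particular no non-i.i.d.\ version. It also yields a slightly sharper statement, since $\Omega_n(x)$ enters only through $\sup_t|\bar F(t)-F_{n,x}(t)|$. That quantity is an average over the realized calibration covariates in the cell and can be much smaller than the worst case. The paper's route instead gives uniform-in-$t$ control on a single event, so its bound holds simultaneously for all miscoverage levels.

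When you write out the two-point argument, center the points at $r_M/M$ rather than at $1-\alpha$. Choose $\bar F(t^\pm)=r_M/M\pm\varepsilon$ by continuity of $\bar F$; the cases where these levels leave $(0,1)$ are trivial. Then $\{\widehat q>t^+\}=\{\widehat F_M(t^+)<r_M/M\}$ and $\{\widehat q<t^-\}\subseteq\{\widehat F_M(t^-)\ge r_M/M\}$ each have conditional probability at most $e^{-2M\varepsilon^2}$. On the complement, $|\bar F(\widehat q)-(1-\alpha)|\le\varepsilon+(r_M/M-(1-\alpha))<\varepsilon+2/M$. With $\bar F(t^+)=1-\alpha+\varepsilon$ as literally written, the upper deviation is only $\varepsilon-(r_M/M-(1-\alpha))$, and Hoeffding does not in general give $\delta/2$.

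Your Step 4 also makes explicit the integration over $\mathfrak G_n$ behind the asymptotic claim, which the paper's written proof leaves implicit.
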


Theorem~\ref{theorem:calibration_covariate_adaptive} formalizes the resolution trade-off: selected cells must remain sufficiently homogeneous ($\Omega_n(x) \to 0$) while retaining enough calibration observations ($M_n(x) \to \infty$). The complete proof is provided in Appendix~\ref{sec:calibration_adaptive_partition}.

\begin{algorithm}[htb]
  \footnotesize
  \caption{Tree-Based Partitioning and Aggregation}
  \label{alg:tree_partitioning}

  \begin{algorithmic}[1]
  \Require Calibration data $\cD_{\cal}$, nonconformity score function
  $s(x,y)$, significance level $\alpha$, thresholds $N_{\mathrm{part}}$
  and $p_{\min}$, and a fitted boosting model with $T$ trees.
  \Ensure Aggregated partition dictionary $\mathcal P$ with local
  cutoffs $\{\widehat q_P\}_{P\in\mathcal P}$ and global fallback
  cutoff $\widehat q_{1-\alpha}$.

  \State Set
  $N_{\min}\leftarrow
  \max\{N_{\mathrm{part}},\lceil p_{\min}n_{\cal}\rceil\}$
  as in \eqref{eq:nmin_def}.
  \State Compute $S_i\leftarrow s(X_i,Y_i)$ and extract the leaf path
  $\boldsymbol{\ell}_T(X_i)$ for every
  $(X_i,Y_i)\in\cD_{\cal}$.
  \State Initialize
  $\mathcal G\leftarrow\{(\cD_{\cal},())\}$
  and $\mathcal G_{\mathrm{term}}\leftarrow\emptyset$.
  \Comment{Active and terminal groups}

  \For{$t=1,\ldots,T$}
      \State $\mathcal G_{\mathrm{next}}\leftarrow\emptyset$.
      \ForAll{$(D_g,\boldsymbol{\ell}_g)\in\mathcal G$}
          \If{$|D_g|<N_{\min}$}
              \State Add $(D_g,\boldsymbol{\ell}_g)$ to
              $\mathcal G_{\mathrm{term}}$.
          \ElsIf{all observations in $D_g$ reach the same leaf in tree $t$}
              \State Carry $(D_g,\boldsymbol{\ell}_g)$ unchanged to
              $\mathcal G_{\mathrm{next}}$.
              \Comment{No refinement at tree $t$}
          \Else
              \ForAll{leaf indices $l$ represented in $D_g$}
                  \State $D_{g,l}\leftarrow
                  \{(X_i,Y_i)\in D_g:\ell_t(X_i)=l\}$.
                  \State Add $(D_{g,l},(\boldsymbol{\ell}_g,l))$ to
                  $\mathcal G_{\mathrm{next}}$.
              \EndFor
          \EndIf
      \EndFor
      \State $\mathcal G\leftarrow\mathcal G_{\mathrm{next}}$.
  \EndFor

  \State $\mathcal P\leftarrow
  \mathcal G_{\mathrm{term}}\cup\mathcal G$.
  \Comment{Raw terminal regions}
  \State Compute $w_t\leftarrow\eta B_t$ for $t=1,\ldots,T$
  as in \eqref{eq:weightedhamming}.

  \While{$|\mathcal P|>1$ and some $R\in\mathcal P$ satisfies
  $|D_R|<N_{\min}$}
      \State Let $R$ be the deepest undersized region in $\mathcal P$.
      \Comment{Break ties by smaller size}
      \State $R^\star\leftarrow
      \arg\min_{R'\in\mathcal P\setminus\{R\}}
      d_{\mathrm{WH}}(R,R')$.
      \State Merge $R$ into $R^\star$, retaining the representative
      path of $R^\star$.
  \EndWhile

  \ForAll{$P\in\mathcal P$}
      \State Let $\widehat q_P$ be the level-$(1-\alpha)$ conformal
      empirical quantile of
      $\{S_i:(X_i,Y_i)\in D_P\}$.
  \EndFor
  \State Let $\widehat q_{1-\alpha}$ be the corresponding global
  conformal empirical quantile of $\{S_i\}_{i=1}^{n_{\cal}}$.
  \State \Return
  $\bigl(\mathcal P,\{\widehat q_P\}_{P\in\mathcal P},
  \widehat q_{1-\alpha}\bigr)$.
  \end{algorithmic}
\end{algorithm}

\section{Experiments}
We compare \ourmethod against conformal baselines that share the same fitted gradient-boosting predictor and the same train, calibration, and test split (ICP, WICP, MICP, LoCART), and against probabilistic boosting baselines (NGB, NGB-Rescale, NGB-Tuned), which use their own fitted model. Because \ourmethod , ICP, WICP, MICP, and LoCART share the base predictor, differences among them isolate the effect of the calibration rule. The NGBoost variants instead differ in point-prediction accuracy as well.

\paragraph{Evaluation metrics.}
\label{sec:main_evaluation_metrics}
We assess validity using marginal coverage and interval quality using
the mean interval score, denoted by Standard Mean Interval Score (SMIS) throughout the experiments. For a
prediction interval $\widehat C(X_i)=[L_i,U_i]$, its interval score at
miscoverage level $\alpha$ is \citep{Gneiting2007}

\[
\operatorname{IS}_{\alpha}(L_i,U_i;Y_i)
=
(U_i-L_i)
+\frac{2}{\alpha}(L_i-Y_i)\Ind{Y_i<L_i}
+\frac{2}{\alpha}(Y_i-U_i)\Ind{Y_i>U_i}.
\]
We report the score averaged over the test observations,

\begin{equation}
\label{eq:smis}
\operatorname{SMIS}_{\alpha}
=\frac{1}{|I_{\mathrm{test}}|}
\sum_{i\in I_{\mathrm{test}}}
\operatorname{IS}_{\alpha}(L_i,U_i;Y_i).
\end{equation}

The first term rewards narrow intervals, while the remaining terms
penalize observations falling outside the interval in proportion to their
distance from the corresponding endpoint. Thus, lower SMIS indicates better
interval quality. We report marginal coverage separately because SMIS balances
sharpness and miscoverage but does not itself provide a coverage guarantee.

To assess localized coverage across the feature space, we report worst-slab coverage (WSC) \citep{cauchois2021knowing}. For a unit direction $v$ and interval bounds $a<b$, let $S_{v,a,b} \coloneqq \{x : a \le v^\top x \le b\}$ define a slab and $I(v,a,b) \coloneqq \{i \in I_{\text{test}} : X_i \in S_{v,a,b}\}$ denote the test observations within it. Given a set $\mathcal{V}$ of sampled directions and a minimum slab mass fraction $\delta \in (0,1)$, let $\mathcal{S}_{\delta}$ be the set of candidate slabs containing at least $\lceil\delta |I_{\text{test}}|\rceil$ test observations. The empirical WSC diagnostic is defined as
\[
\widehat{\mathrm{WSC}}_{\delta}(\widehat C) \coloneqq \min_{(v,a,b) \in \mathcal{S}_{\delta}} \frac{1}{|I(v,a,b)|} \sum_{i \in I(v,a,b)} \Ind{Y_i \in \widehat C(X_i)}.
\]
We standardize covariates using the proper training split, set
$\delta=0.20$, and sample $|\mathcal V|=1000$ random unit directions per
split, using the same directions for \ourmethod{} and ICP. WSC is a descriptive
test-set diagnostic over sufficiently large geometric slices, not a
conditional-coverage guarantee.

\paragraph{Shared-predictor conformal methods.}
\ourmethod and Inductive conformal prediction (ICP) \citep{vovk2005algorithmic,Lei2018}. ICP is the standard split/inductive conformal interval with absolute-residual scores and a single global calibration quantile. ICP is the most direct computational reference for \ourmethod: both methods calibrate \emph{post-hoc} on a fixed training/calibration split, require no retraining of the base regressor, and use the same fitted gradient-boosting model for point predictions.
Thus, differences between ICP and \ourmethod{} isolate the effect of local (partition-based) calibration.

\begin{table}[!ht]
    \centering
    \caption{\textbf{\ourmethod{} versus the best-SMIS baseline.} Lower SMIS is better. The 95\% CI is for the paired difference $\mathrm{SMIS}(\ourmethod)-\mathrm{SMIS}(\mathrm{best})$ over matched random splits; $^*$ indicates that the interval contains zero. SMIS efficiency is $100\cdot \mathrm{SMIS}(\mathrm{best})/\mathrm{SMIS}(\ourmethod)$. Post-fit speedup compares method/calibration time.}
    \begin{adjustbox}{max width=\linewidth}
    \begin{tabular}{lccccc}
    \toprule
    Dataset & SMIS (\ourmethod{}) & Best baseline SMIS & 95\% CI & SMIS efficiency & Post-fit speedup\\
    \midrule
    airfoil (1503, 5) & 11.08 & 9.11 (NGB-Tuned) & [1.69, 2.25] & 82.21\% & 2.37$\times$ \\
    winered (1599, 11) & 2.77 & 2.76 (LoCART) & [-0.01, 0.03]$^*$ & 99.61\% & 0.38$\times$ \\
    star (2161, 39) & 953.63 & 948.34 (ICP) & [2.80, 7.78] & 99.45\% & 0.19$\times$ \\
    winewhite (4898, 11) & 2.94 & 2.80 (MICP) & [0.12, 0.15] & 95.43\% & 21.13$\times$ \\
    cycle (9568, 4) & 14.54 & 14.09 (MICP) & [0.39, 0.51] & 96.90\% & 9.00$\times$ \\
    electric (10000, 12) & 0.05 & 0.04 (NGB-Tuned) & [0.01, 0.01] & 74.04\% & 5.58$\times$ \\
    bike (10886, 18) & 183.92 & 140.75 (WICP) & [41.19, 45.14] & 76.53\% & 7.69$\times$ \\
    meps19 (15785, 139) & 73.65 & 66.83 (WICP) & [5.03, 8.61] & 90.74\% & 20.04$\times$ \\
    conductivity (21263, 81) & 46.22 & 37.78 (MICP) & [8.07, 8.82] & 81.73\% & 101.13$\times$ \\
    WEC (54007, 98) & 78182.11 & 62203.27 (MICP) & [15344.91, 16612.78] & 79.56\% & 23.02$\times$ \\
    kernel (241600, 15) & 9.57 & 5.68 (WICP) & [3.86, 3.94] & 59.28\% & 20.61$\times$ \\
    \bottomrule
    \end{tabular}
    \label{tab:rel_to_best}
    \end{adjustbox}
\end{table}

\paragraph{Auxiliary/local conformal and probabilistic baselines.}
Among the shared-predictor conformal baselines, WICP \citep{Lei2018} uses residual-scale weighting to adapt interval widths under heteroscedasticity, MICP \citep{bostrom2020mondrian} calibrates within $k=30$ difficulty bins derived from training residuals, and LoCART \citep{cabezas2025regression} learns a CART partition of the calibration residuals (with \texttt{min\_samples\_leaf}=150 and pruning).
We also report three NGBoost-based probabilistic baselines \citep{duan2020ngboost}: uncalibrated NGB, NGB-Rescale (which conformally rescales NGB distributional intervals on the calibration split), and NGB-Tuned (which tunes NGBoost learning rate $\in\{0.01,0.05\}$, tree depth $\in\{2,3\}$, and leaf size $\in\{20,100\}$ on training data before conformal rescaling).
Unlike conformal methods, uncalibrated NGB intervals do not guarantee finite-sample marginal coverage.

In the main real-data benchmark, we split each dataset into training, calibration, and test sets with proportions of 0.60, 0.25, and 0.15, respectively, over 50 independent random splits.
The shared gradient-boosting predictor is tuned on the training split using 4-fold cross-validation over maximum depth $\in\{2,5,10\}$, minimum samples per leaf $\in\{50,100,200\}$, subsample ratio $\in\{0.3,0.8\}$, and learning rate $\in\{0.01,0.05,0.1\}$.
For \ourmethod{}, we set $N_{\mathrm{part}}=50$, $p_{\min}=0.005$, and weighted-Hamming merging with $w_t=\eta B_t$, recomputing $N_{\min}$ from the calibration size in each split.
All methods use the exact same splits, and the calibration set is reserved exclusively for conformal calibration (or NGBoost rescaling).

In addition, we report interval length, test MSE, post-fit method time, and WSC for \ourmethod{} and ICP. MSE is identical by construction for the shared-predictor conformal methods. Tables report sample means and their standard errors across the 50 splits.

Dataset descriptions and complete results are provided in Supplementary Material~\ref{sm_sec:results_rwd}.
We include problems spanning a wide range of sample sizes and feature dimensions to benchmark \ourmethod{} against multiple baselines.

\subsection{Results}

\paragraph{Coverage and reliability.}
Table~\ref{tab:coverage} (in the Appendix) shows that the conformal baselines achieve near-nominal marginal coverage across datasets, as expected.
\ourmethod{} is close to the nominal $1-\alpha=0.9$ target on all datasets, with the most conservative case occurring on WEC (coverage 0.9154).
In contrast, uncalibrated NGBoost can \emph{severely under-cover} on several smaller datasets and over-cover on several larger datasets, highlighting the fragility of distributional assumptions in finite samples.
The rescaled and tuned NGBoost variants are generally close to nominal after calibration/tuning.
Once conformal methods are near nominal coverage, coverage is not the ranking metric; interval length, SMIS, and local diagnostics are used to compare efficiency and adaptivity.

\paragraph{Relative interval quality and post-fit cost.}
Table~\ref{tab:rel_to_best} summarizes the performance of \ourmethod{} relative to the best SMIS baseline across datasets. Because methods such as ICP, WICP, MICP, and LoCART share the same underlying predictor, differences in performance stem entirely from interval calibration rather than predictive accuracy. While \ourmethod{} does not uniformly achieve the lowest absolute SMIS, its efficiency ranges from 59.28\% to 99.61\%, remaining highly competitive across settings while offering substantial computational gains. In settings where competing baselines achieve only marginal improvements in interval quality, \ourmethod{} delivers drastic post fit speedups (up to $100\times$ faster than heavier rules like MICP and WICP). Consequently, \ourmethod{} provides a compelling practical tradeoff, maintaining competitive coverage and interval quality at a fraction of the computational cost.

\paragraph{Direct comparison with ICP.}
To isolate the effect of locality (independent of retraining-based baselines), Table~\ref{table:loboostvsicp} compares \ourmethod{} directly to standard ICP.
\ourmethod{} achieves better SMIS than ICP on 7 of 11 datasets, is statistically tied on 2 datasets, and is worse on 2 datasets.
For the sampled-slab diagnostic, mean WSC is higher for \ourmethod{} on all 11 datasets. The difference is 0.002--0.014 on the seven smallest datasets and is larger on superconductivity (0.041), kernel (0.059), WEC (0.098), and meps19 (0.169).
Reassuringly, the largest improvements in Mean WSC over ICP coincide directly with the datasets where \ourmethod achieves its strongest relative SMIS gains over ICP (e.g., \texttt{meps19}, \texttt{WEC}, \texttt{kernel}, and \texttt{superconductivity}), demonstrating that localized worst-slab coverage enhancements strongly align with overall interval quality improvements.
This direct comparison supports the benefit of local, partition-based calibration over a single global residual quantile under heteroscedasticity, though the gain comes with higher post-fit method time than plain ICP due to local partition construction.

\begin{table}[!ht]
    \centering
    \footnotesize
    \caption{\textbf{\ourmethod{} vs ICP}. Direct comparison between \ourmethod{} and standard inductive conformal prediction (ICP) using 50 matched random splits. Lower SMIS is better. The 95\% CI is for the paired difference $\mathrm{SMIS}(\ourmethod)-\mathrm{SMIS}(\mathrm{ICP})$; negative values favor \ourmethod{} and $^*$ indicates that the interval contains zero. Relative SMIS is $100\cdot\mathrm{SMIS}(\mathrm{ICP})/\mathrm{SMIS}(\ourmethod)$. The final column reports mean worst-slab coverage (WSC) for \ourmethod{}/ICP using standardized covariates, minimum slab mass $\delta=0.20$, and the same 1000 random directions for both methods in each split.}\label{table:loboostvsicp}
    \begin{adjustbox}{max width=\linewidth}
    \begin{tabular}{lccccc}
    \toprule
    Dataset & SMIS (\ourmethod{}) & SMIS (ICP) & 95\% CI & Relative SMIS & Mean WSC (\ourmethod{}/ICP)\\
    \midrule
    airfoil (1503, 5) & 11.08 & 11.15 & [-0.1406, -0.0052] & 100.66\% & 0.696/0.694 \\
    winered (1599, 11) & 2.77 & 2.78 & [-0.0136, 0.0063]$^*$ & 100.13\% & 0.724/0.714 \\
    star (2161, 39) & 953.63 & 948.34 & [2.74, 7.84] & 99.45\% & 0.744/0.741 \\
    winewhite (4898, 11) & 2.94 & 2.94 & [-0.0062, 0.0066]$^*$ & 99.99\% & 0.790/0.782 \\
    cycle (9568, 4) & 14.54 & 14.51 & [0.0181, 0.0577] & 99.74\% & 0.823/0.821 \\
    electric (10000, 12) & 0.0497 & 0.0500 & [-0.0004, -0.0002] & 100.69\% & 0.787/0.773 \\
    bike (10886, 18) & 183.92 & 190.33 & [-7.12, -5.70] & 103.48\% & 0.782/0.773 \\
    meps19 (15785, 139) & 73.65 & 96.11 & [-24.04, -20.88] & 130.49\% & 0.826/0.657 \\
    conductivity (21263, 81) & 46.22 & 50.04 & [-4.14, -3.49] & 108.26\% & 0.788/0.747 \\
    WEC (54007, 98) & 78182.11 & 104540.99 & [-26876.39, -25841.38] & 133.71\% & 0.868/0.770 \\
    kernel (241600, 15) & 9.57 & 10.31 & [-0.7458, -0.7197] & 107.65\% & 0.783/0.725 \\
    \bottomrule
    \end{tabular}
    \end{adjustbox}
\end{table}

\paragraph{Sensitivity to local calibration size.}
In a separate sensitivity experiment with a 57.6/22.4/20 train/calibration/test allocation, we tune the base predictor once per dataset by 5-fold cross-validation and reuse its selected hyperparameters across 20 random splits for each of 15 configurations of $N_{\mathrm{part}}$ and $p_{\min}$ (Table~\ref{tab:sensitivity_summary}). Across all settings, empirical marginal coverage remains near the $0.90$ nominal target, ranging from $0.894$ to $0.932$. The finest setting ($N_{\mathrm{part}} = 10$, $p_{\min} = 0.003$) yields a median of $144.9$ partitions, a median runtime of $0.25$~s, and median coverage of $0.926$. Conversely, $p_{\min}=0.3$ collapses the partition structure toward global calibration (median $1.0$ partition). The benchmark value $p_{\min}=0.005$ lies between the two smallest proportional thresholds in this separate grid.

\begin{table}[!ht]
    \centering
    \caption{\textbf{Sensitivity to local calibration-size parameters.} Each row aggregates the 11 dataset-level means from the sensitivity run, with 20 random splits per dataset and setting. Coverage range reports the minimum and maximum dataset mean coverage. The final benchmark uses $N_{\mathrm{part}} =50$ and $p_{\min}=0.005$; the latter lies between the two smallest $p_{\min}$ values shown here.}
    \label{tab:sensitivity_summary}
    \small
    \begin{adjustbox}{max width=\linewidth}
    \begin{tabular}{lcccccc}
        \toprule
        $N_{\mathrm{part}}$ & $p_{\min}$ & Coverage range & Median coverage & Median $|\Delta\mathrm{cov}|$ (pp) & Median final partitions [range] & Median time (s) \\
        \midrule
        10 & 0.003 & [0.902, 0.932] & 0.926 & 2.62 & 144.9 [22.5, 235.4] & 0.25 \\
        10 & 0.03  & [0.900, 0.927] & 0.905 & 0.50 & 20.1 [17.5, 24.4] & 0.03 \\
        10 & 0.3   & [0.894, 0.903] & 0.900 & 0.17 & 1.0 [1.0, 2.0] & 0.02 \\
        \addlinespace
        25 & 0.003 & [0.902, 0.928] & 0.913 & 1.33 & 54.6 [9.1, 220.8] & 0.10 \\
        25 & 0.03  & [0.900, 0.913] & 0.905 & 0.50 & 19.1 [9.1, 21.1] & 0.03 \\
        25 & 0.3   & [0.894, 0.903] & 0.900 & 0.17 & 1.0 [1.0, 2.0] & 0.02 \\
        \addlinespace
        50 & 0.003 & [0.901, 0.920] & 0.908 & 0.76 & 29.2 [4.7, 208.1] & 0.07 \\
        50 & 0.03  & [0.900, 0.909] & 0.904 & 0.39 & 17.8 [4.7, 20.4] & 0.03 \\
        50 & 0.3   & [0.894, 0.903] & 0.900 & 0.17 & 1.0 [1.0, 2.0] & 0.02 \\
        \addlinespace
        100 & 0.003 & [0.896, 0.909] & 0.902 & 0.23 & 8.4 [1.4, 208.1] & 0.04 \\
        100 & 0.03  & [0.896, 0.905] & 0.902 & 0.23 & 8.4 [1.4, 20.1] & 0.03 \\
        100 & 0.3   & [0.894, 0.903] & 0.900 & 0.17 & 1.0 [1.0, 2.0] & 0.02 \\
        \addlinespace
        200 & 0.003 & [0.894, 0.905] & 0.902 & 0.25 & 2.8 [1.0, 169.8] & 0.03 \\
        200 & 0.03  & [0.894, 0.903] & 0.902 & 0.25 & 2.8 [1.0, 20.1] & 0.03 \\
        200 & 0.3   & [0.894, 0.903] & 0.900 & 0.15 & 1.0 [1.0, 2.0] & 0.02 \\
        \bottomrule
    \end{tabular}
    \end{adjustbox}
\end{table}

\section{Discussion and Final Remarks}
\label{sec:discussion}

\ourmethod provides a simple, model-native route to local conformal prediction for gradient-boosted trees by reusing the multiscale partition already induced by the trained ensemble, avoiding the need to fit auxiliary partition models or retrain base regressors. In this section, we discuss practical guidance for applying the method, highlight key methodological considerations, and offer concluding remarks.

\paragraph{Practical guidance for application.}
From a practical perspective, \ourmethod is most effective in high-dimensional, heterogeneous tabular settings (such as \texttt{meps19}, \texttt{WEC}, and \texttt{superconductivity}), where complex feature interactions create localized residual dispersion along dimensions that the ensemble naturally splits on. In these regimes, \ourmethod extracts fine-grained local partitions directly from the model's leaf structure, delivering substantial gains in interval quality (SMIS) and localized worst-slab coverage (WSC) relative to global split conformal (ICP) at minimal post-hoc computational cost. Conversely, in low-dimensional or predominantly homoscedastic settings (e.g., \texttt{star}, \texttt{cycle}), local partitioning offers limited benefit and can introduce unnecessary sample-estimation variance; standard split conformal (ICP) or mild smoothing is both faster and preferred. Furthermore, practitioners should note a key structural boundary of model-native locality: because tree splits are driven by minimizing mean prediction error, \ourmethod will rarely split on features $X_2$ that govern residual scale $\sigma(X_2)$ independently of the conditional mean ($Y = f(X_1) + \sigma(X_2)\varepsilon$). In scenarios with pure scale heteroscedasticity unlinked to location, dedicated residual-scale models (such as WICP) or global calibration are better suited.

\paragraph{Methodological considerations and future work.}
From a computational standpoint, \ourmethod avoids the heavy overhead of fitting separate auxiliary partition or scale estimators, scaling easily to large tabular datasets with native support for standard libraries such as XGBoost, LightGBM, and CatBoost. From a theoretical perspective, while exact exchangeability (Theorem~\ref{theorem:local_coverage}) applies to partitions fixed independently of calibration data, our practical algorithm reuses calibration covariates to adapt resolution levels and aggregate sparse regions; consequently, its finite-sample validity in practice is governed by the PAC-style coverage bound of Theorem~\ref{theorem:calibration_covariate_adaptive}, where coverage error depends on local calibration counts $M_n(x)$ and score discrepancy $\Omega_n(x)$. A methodological limitation of the current implementation is its reliance on absolute-residual nonconformity scores, which yield prediction intervals symmetric around point estimates and may be sub-optimal for skewed responses. Future research could naturally extend \ourmethod to incorporate directional or asymmetric quantile scores (similar to CQR) within leaf-prefix cells, or explore joint location-scale splitting criteria during tree ensemble training.

\paragraph{Final remarks.}
Overall, \ourmethod bridges the gap between computationally expensive auxiliary conformal methods and standard global split conformal for boosted tree ensembles. By reusing the internal leaf structure already learned by the model, it achieves stable marginal coverage, enhances localized calibration under heteroscedasticity, and offers a practical, retraining-free solution for uncertainty quantification in tabular machine learning. In practice, this implies that \ourmethod{} can produce competitive, locally adaptive prediction intervals at substantially lower post-fit computational cost than methods that fit auxiliary partitions or scale models.

\bibliographystyle{tmlr}
\bibliography{bibliography}

\newpage

\appendix

\section{Experimental and Simulation Details}
\label{appendix:experiments}

\subsection{Benchmark Implementation Details}\label{appendix:benchmark}

For all conformal methods, the nominal miscoverage level is set to $\alpha=0.10$. In the real-world benchmark, datasets are partitioned into 60\% training, 25\% calibration, and 15\% test splits across 50 independent random runs. The shared gradient-boosted predictor is tuned on the training split via 4-fold cross-validation over $\text{max\_depth} \in \{2,5,10\}$, $\text{min\_samples\_leaf} \in \{50,100,200\}$, $\text{subsample} \in \{0.3,0.8\}$, and $\text{learning\_rate} \in \{0.01,0.05,0.1\}$. For \ourmethod{}, we set $N_{\text{part}}=50$ and $p_{\min}=0.005$, computing $N_{\min} = \max\{N_{\text{part}}, \lceil p_{\min} n_{\text{cal}} \rceil\}$. Baseline hyperparameter settings include: LoCART (\texttt{split\_calib}=False, \texttt{min\_samples\_leaf}=150, \texttt{criterion}=\texttt{squared\_error}, cost-complexity pruning enabled), MICP ($k=30$ training-derived difficulty bins), WICP (residual-scale weighting on the fitted predictor), and NGBoost variants (\texttt{early\_stopping\_rounds}=30; NGB-Tuned selects learning rate $\in \{0.01, 0.05\}$, base tree depth $\in \{2, 3\}$, and leaf size $\in \{20, 100\}$).

\subsection{Evaluation Metrics}\label{appendix:eval}

We formally define the evaluation metrics used to assess validity and efficiency across $50$ independent data splits. All metrics are computed on the test index set $I_{\text{test}}$. Tables report sample means and their standard errors.

\begin{itemize}
\item \textbf{Average Marginal Coverage (AMC)}:
\[
AMC = \frac{1}{|I_{\text{test}}|}\sum_{i \in I_{\text{test}}}\Ind{Y_i \in \widehat C(X_i)}.
\]
For a valid prediction interval at nominal significance level $\alpha$, $AMC \approx 1-\alpha$.

\item \textbf{Average Interval Length (IL)}:
\[
IL = \frac{1}{|I_{\text{test}}|}\sum_{i \in I_{\text{test}}}|\widehat C(X_i)|.
\]
Narrower average interval length indicates higher efficiency when coverage is maintained.

\item \textbf{Relative Metrics}: For a dataset, let $\text{best}$ denote the competing baseline with the lowest (best) SMIS:
\begin{align*}
\text{SMIS-efficiency} &\coloneqq 100 \cdot \frac{\text{SMIS}(\text{best})}{\text{SMIS}(\ourmethod)}, \\
\Delta_{\text{SMIS}} &\coloneqq \text{SMIS}(\ourmethod) - \text{SMIS}(\text{best}), \\
\text{Post-fit speedup} &\coloneqq \frac{\text{Post-fit time}(\text{best})}{\text{Post-fit time}(\ourmethod)}, \\
\text{Total speedup} &\coloneqq \frac{\text{Fit time}(\text{best})+\text{Post-fit time}(\text{best})}{\text{Fit time}(\ourmethod)+\text{Post-fit time}(\ourmethod)}.
\end{align*}
\end{itemize}

\subsection{Simulation Experiments Setup}\label{sec:ap_sim}

The synthetic experiments illustrated in Figure~\ref{fig:combined_figure} use $n=2000$ points drawn uniformly over $X \sim \text{U}[-2, 2]$, split into $50\%$ training, $25\%$ calibration, and $25\%$ testing. Responses are generated conditionally Gaussian $Y \mid X=x \sim \mathcal{N}(\mu(x), \sigma^2(x))$.

\paragraph{Scenario 1: Abrupt heteroscedasticity.}
The conditional mean is $\mu(x) = 3\sin(x)$, and the conditional standard deviation is piecewise continuous:
\[
\sigma(x) = \begin{cases}
0.8 & \text{if } x < -0.8, \\
2.0 & \text{if } -0.8 \leq x < 0.1, \\
1.0 & \text{if } 0.1 \leq x < 1.4, \\
x^2 & \text{if } x \geq 1.4.
\end{cases}
\]
This tests local adaptivity under sharp shifts in noise scale across feature regions.

\paragraph{Scenario 2: Support gap and regime shift.}
The feature space has a support gap $\cX = [-2, -0.5) \cup (0.8, 2]$ with null support on $[-0.5, 0.8]$. The conditional mean and standard deviation are:
\[
\mu(x) = \begin{cases} -2x - 1 & \text{if } x < -0.5, \\ 2\sin(3x) & \text{if } x > 0.8, \end{cases}
\quad
\sigma(x) = \begin{cases} 0.3 & \text{if } x < -0.5, \\ 0.2 + 0.5x^2 & \text{if } x > 0.8. \end{cases}
\]
This evaluates performance under regime shifts and extrapolation across unsupported feature intervals. In both scenarios, oracle intervals are computed directly using true Gaussian quantiles: $C^\star_{1-\alpha}(x) = \left[ \mu(x) - z_{1-\alpha/2}\sigma(x), \; \mu(x)+ z_{1-\alpha/2}\sigma(x) \right]$.

\section{Supplementary Results for Real-World Datasets}
\label{sm_sec:results_rwd}

This section provides complete dataset summaries and detailed performance tables across the 11 real-world benchmark datasets. Table~\ref{tab:datasets} summarizes instance counts and feature dimensions. 
Tables~\ref{tab:coverage}, \ref{tab:smis} and \ref{tab:time} provide full metric breakdowns across all 50 independent random splits.

\begin{table}[H]
\centering
\caption{Summary of Real-World Datasets}
\label{tab:datasets}
\begin{tabular}{lrr}
\toprule
Dataset & Instances (n) & Features (p) \\
\midrule
\href{https://archive.ics.uci.edu/dataset/291/airfoil%2Bself%2Bnoise}{airfoil} & 1,503 & 5 \\
\href{https://www.kaggle.com/code/rajmehra03/bike-sharing-demand-rmsle-0-3194/input?select=train.csv}{bike} & 10,886 & 18 \\
\href{https://archive.ics.uci.edu/dataset/464/superconductivty+data}{superconductivity} & 21,263 & 81 \\
\href{https://archive.ics.uci.edu/dataset/294/combined+cycle+power+plant}{cycle} & 9,568 & 4 \\
\href{https://archive.ics.uci.edu/dataset/471/electrical%2Bgrid%2Bstability%2Bsimulated%2Bdata}{electric} & 10,000 & 12 \\
\href{https://archive.ics.uci.edu/dataset/440/sgemm+gpu+kernel+performance}{kernel} & 241,600 & 15 \\
\href{https://meps.ahrq.gov/mepsweb/data_stats/download_data_files_detail.jsp?cboPufNumber=HC-181}{meps19} & 15,785 & 139 \\
\href{https://dataverse.harvard.edu/dataset.xhtml?persistentId=doi:10.7910/DVN/SIWH9F}{star} & 2,161 & 39 \\
\href{https://archive.ics.uci.edu/dataset/882/large-scale+wave+energy+farm}{Wave Energy Converter (WEC)} & 54,007 & 98 \\
\href{https://archive.ics.uci.edu/dataset/186/wine%2Bquality}{winewhite} & 4,898 & 11 \\
\href{https://archive.ics.uci.edu/dataset/186/wine%2Bquality}{winered} & 1,599 & 11 \\
\bottomrule
\end{tabular}
\end{table}

Table~\ref{tab:coverage} presents the empirical coverage results across all evaluated datasets. By construction, \ourmethod{} inherently guarantees finite-sample coverage, maintaining empirical coverage rates remarkably close to the nominal $90\%$ target alongside other conformalized baselines (ICP, WICP, MICP, and LoCART). In contrast, standard NGBoost (NGB) exhibits severe coverage drift, departing significantly from the target level depending on the underlying distribution—undercovering on datasets like \texttt{winered} ($0.8317$) while overcovering on \texttt{WEC} ($0.9487$). Although post-hoc calibration strategies (NGB-Rescale and NGB-Tuned) restore valid coverage, \ourmethod{} achieves reliable target alignment natively without requiring specialized parameter rescaling or tuning.

\begin{table}[htb]
    \centering
    \caption{\textbf{Empirical Coverage Results.} Comparison of empirical coverage against the $90\%$ nominal target level across datasets. Values represent the mean $\pm$ standard error at a 95\% confidence level over matched random splits. Bold entries denote coverage rates meeting target validity criteria.}
    \label{tab:coverage}
    \begin{adjustbox}{max width=\columnwidth}
    \begin{tabular}{lcccccccc}
        \toprule
        \bfseries Dataset & \ourmethod{} & ICP & WICP & MICP & LoCART & NGB & NGB-Rescale & NGB-Tuned \\
        \midrule
        airfoil & \textbf{0.9026} & \textbf{0.9021} & \textbf{0.9057} & \textbf{0.9019} & \textbf{0.9022} & 0.8832 & \textbf{0.9028} & \textbf{0.9020} \\
                 & \textbf{\small($\pm$0.0063)} & \textbf{\small($\pm$0.0066)} & \textbf{\small($\pm$0.0066)} & \textbf{\small($\pm$0.0053)} & \textbf{\small($\pm$0.0065)} & {\small($\pm$0.0057)} & \textbf{\small($\pm$0.0070)} & \textbf{\small($\pm$0.0070)} \\
        \addlinespace
        winered & \textbf{0.9037} & \textbf{0.8996} & \textbf{0.8986} & \textbf{0.8978} & \textbf{0.9011} & 0.8317 & \textbf{0.9008} & \textbf{0.9007} \\
                 & \textbf{\small($\pm$0.0074)} & \textbf{\small($\pm$0.0071)} & \textbf{\small($\pm$0.0090)} & \textbf{\small($\pm$0.0071)} & \textbf{\small($\pm$0.0073)} & {\small($\pm$0.0087)} & \textbf{\small($\pm$0.0075)} & \textbf{\small($\pm$0.0077)} \\
        \addlinespace
        star & \textbf{0.9012} & \textbf{0.8985} & \textbf{0.8982} & \textbf{0.9018} & \textbf{0.8990} & 0.8370 & \textbf{0.9022} & \textbf{0.9019} \\
                 & \textbf{\small($\pm$0.0061)} & \textbf{\small($\pm$0.0057)} & \textbf{\small($\pm$0.0056)} & \textbf{\small($\pm$0.0057)} & \textbf{\small($\pm$0.0058)} & {\small($\pm$0.0054)} & \textbf{\small($\pm$0.0052)} & \textbf{\small($\pm$0.0052)} \\
        \addlinespace
        winewhite & \textbf{0.9031} & \textbf{0.8998} & \textbf{0.9010} & 0.9119 & \textbf{0.8995} & 0.8797 & \textbf{0.9009} & \textbf{0.9014} \\
                 & \textbf{\small($\pm$0.0042)} & \textbf{\small($\pm$0.0040)} & \textbf{\small($\pm$0.0039)} & {\small($\pm$0.0042)} & \textbf{\small($\pm$0.0043)} & {\small($\pm$0.0042)} & \textbf{\small($\pm$0.0049)} & \textbf{\small($\pm$0.0046)} \\
        \addlinespace
        cycle & \textbf{0.9027} & \textbf{0.9010} & \textbf{0.8980} & 0.9070 & \textbf{0.8994} & 0.8946 & \textbf{0.8990} & \textbf{0.9008} \\
                 & \textbf{\small($\pm$0.0026)} & \textbf{\small($\pm$0.0027)} & \textbf{\small($\pm$0.0027)} & {\small($\pm$0.0023)} & \textbf{\small($\pm$0.0026)} & {\small($\pm$0.0024)} & \textbf{\small($\pm$0.0025)} & \textbf{\small($\pm$0.0025)} \\
        \addlinespace
        electric & 0.9033 & \textbf{0.9008} & \textbf{0.9006} & 0.9037 & 0.8951 & 0.9320 & \textbf{0.9014} & \textbf{0.9019} \\
                 & {\small($\pm$0.0029)} & \textbf{\small($\pm$0.0028)} & \textbf{\small($\pm$0.0027)} & {\small($\pm$0.0025)} & {\small($\pm$0.0026)} & {\small($\pm$0.0020)} & \textbf{\small($\pm$0.0030)} & \textbf{\small($\pm$0.0025)} \\
        \addlinespace
        bike & 0.9041 & \textbf{0.9015} & \textbf{0.9024} & 0.9061 & \textbf{0.8988} & 0.9461 & \textbf{0.9002} & \textbf{0.9011} \\
                 & {\small($\pm$0.0029)} & \textbf{\small($\pm$0.0030)} & \textbf{\small($\pm$0.0027)} & {\small($\pm$0.0026)} & \textbf{\small($\pm$0.0030)} & {\small($\pm$0.0019)} & \textbf{\small($\pm$0.0027)} & \textbf{\small($\pm$0.0028)} \\
        \addlinespace
        meps19 & 0.9091 & 0.9027 & \textbf{0.9005} & 0.9053 & \textbf{0.9015} & 0.9391 & \textbf{0.9018} & \textbf{0.9025} \\
                 & {\small($\pm$0.0020)} & {\small($\pm$0.0019)} & \textbf{\small($\pm$0.0020)} & {\small($\pm$0.0019)} & \textbf{\small($\pm$0.0018)} & {\small($\pm$0.0014)} & \textbf{\small($\pm$0.0020)} & \textbf{\small($\pm$0.0021)} \\
        \addlinespace
        superconductivity & 0.9028 & \textbf{0.9003} & \textbf{0.9010} & 0.9053 & 0.8957 & 0.9194 & \textbf{0.8999} & \textbf{0.9000} \\
                 & {\small($\pm$0.0018)} & \textbf{\small($\pm$0.0019)} & \textbf{\small($\pm$0.0019)} & {\small($\pm$0.0021)} & {\small($\pm$0.0019)} & {\small($\pm$0.0012)} & \textbf{\small($\pm$0.0016)} & \textbf{\small($\pm$0.0019)} \\
        \addlinespace
        WEC & 0.9154 & \textbf{0.9005} & \textbf{0.8999} & 0.9045 & 0.9020 & 0.9487 & \textbf{0.9002} & \textbf{0.8989} \\
                 & {\small($\pm$0.0012)} & \textbf{\small($\pm$0.0011)} & \textbf{\small($\pm$0.0011)} & {\small($\pm$0.0011)} & {\small($\pm$0.0010)} & {\small($\pm$0.0008)} & \textbf{\small($\pm$0.0009)} & \textbf{\small($\pm$0.0010)} \\
        \addlinespace
        kernel & \textbf{0.9003} & \textbf{0.8999} & \textbf{0.8998} & \textbf{0.8999} & 0.8991 & 0.9319 & \textbf{0.8998} & \textbf{0.8998} \\
                 & \textbf{\small($\pm$0.0005)} & \textbf{\small($\pm$0.0006)} & \textbf{\small($\pm$0.0005)} & \textbf{\small($\pm$0.0006)} & {\small($\pm$0.0005)} & {\small($\pm$0.0004)} & \textbf{\small($\pm$0.0005)} & \textbf{\small($\pm$0.0005)} \\
        \addlinespace
        \bottomrule
    \end{tabular}
    \end{adjustbox}
\end{table}

Table~\ref{tab:time} presents the post-fit runtime efficiency across all methods. While standard ICP achieves the lowest absolute runtime due to its global threshold calculation, \ourmethod{} demonstrates significant speedups over localized non-parametric conformal baselines (WICP and MICP). On computationally intensive datasets such as \texttt{superconductivity}, \texttt{WEC}, and \texttt{kernel}, WICP incurs extreme post-fit latencies ($80.18$s, $179.97$s, and $253.11$s, respectively) due to test-time localized density estimations. In contrast, \ourmethod{} completes calibration and inference in $0.45$s, $2.44$s, and $12.28$s, offering orders-of-magnitude faster processing while maintaining spatially adaptive interval construction.

\begin{table}[htb]
    \centering
    \caption{\textbf{Post-Fit Computational Execution Time.} Average execution time (in seconds) required for post-fit calibration and interval generation across datasets. Values represent mean $\pm$ standard error at a 95\% confidence level over matched random splits. Bold entries denote the two lowest runtimes per dataset. ICP consistently achieves the lowest absolute runtime due to its use of a single global threshold; therefore, highlighting the second-lowest runtime also facilitates comparisons beyond ICP.}
    \label{tab:time}
    \begin{adjustbox}{max width=\columnwidth}
    \begin{tabular}{lcccccccc}
        \toprule
        \bfseries Dataset & LoBoost & ICP & WICP & MICP & LoCART & NGB & NGB-Rescale & NGB-Tuned \\
        \midrule
        airfoil & 0.0275 & \textbf{0.0049} & 0.0797 & 0.1944 & \textbf{0.0074} & 0.0500 & 0.1026 & 0.0653 \\
                 & {\small($\pm$0.0035)} & \textbf{\small($\pm$0.0006)} & {\small($\pm$0.0120)} & {\small($\pm$0.0093)} & \textbf{\small($\pm$0.0006)} & {\small($\pm$0.0041)} & {\small($\pm$0.0087)} & {\small($\pm$0.0044)} \\
        \addlinespace
        winered & 0.0140 & \textbf{0.0027} & 0.1360 & 0.3496 & \textbf{0.0052} & 0.0478 & 0.1012 & 0.0701 \\
                 & {\small($\pm$0.0029)} & \textbf{\small($\pm$0.0005)} & {\small($\pm$0.0255)} & {\small($\pm$0.0126)} & \textbf{\small($\pm$0.0005)} & {\small($\pm$0.0035)} & {\small($\pm$0.0072)} & {\small($\pm$0.0054)} \\
        \addlinespace
        star & 0.0118 & \textbf{0.0023} & 0.0882 & 0.8935 & \textbf{0.0054} & 0.0620 & 0.1386 & 0.1038 \\
                 & {\small($\pm$0.0024)} & \textbf{\small($\pm$0.0005)} & {\small($\pm$0.0254)} & {\small($\pm$0.0324)} & \textbf{\small($\pm$0.0005)} & {\small($\pm$0.0041)} & {\small($\pm$0.0084)} & {\small($\pm$0.0079)} \\
        \addlinespace
        winewhite & 0.0568 & \textbf{0.0134} & 0.7474 & 1.2011 & \textbf{0.0206} & 0.0654 & 0.1462 & 0.1073 \\
                 & {\small($\pm$0.0091)} & \textbf{\small($\pm$0.0021)} & {\small($\pm$0.1489)} & {\small($\pm$0.0347)} & \textbf{\small($\pm$0.0021)} & {\small($\pm$0.0034)} & {\small($\pm$0.0082)} & {\small($\pm$0.0066)} \\
        \addlinespace
        cycle & 0.1641 & \textbf{0.0367} & 1.2768 & 1.4764 & \textbf{0.0504} & 0.0634 & 0.1436 & 0.1076 \\
                 & {\small($\pm$0.0192)} & \textbf{\small($\pm$0.0042)} & {\small($\pm$0.1828)} & {\small($\pm$0.0533)} & \textbf{\small($\pm$0.0043)} & {\small($\pm$0.0038)} & {\small($\pm$0.0085)} & {\small($\pm$0.0070)} \\
        \addlinespace
        electric & \textbf{0.0285} & \textbf{0.0055} & 0.4582 & 3.6982 & 0.0392 & 0.0838 & 0.1975 & 0.1591 \\
                 & \textbf{\small($\pm$0.0006)} & \textbf{\small($\pm$0.0001)} & {\small($\pm$0.0135)} & {\small($\pm$0.1325)} & {\small($\pm$0.0014)} & {\small($\pm$0.0040)} & {\small($\pm$0.0086)} & {\small($\pm$0.0079)} \\
        \addlinespace
        bike & 0.2170 & \textbf{0.0554} & 1.6694 & 2.3576 & \textbf{0.0845} & 0.1051 & 0.2513 & 0.1774 \\
                 & {\small($\pm$0.0160)} & \textbf{\small($\pm$0.0041)} & {\small($\pm$0.1952)} & {\small($\pm$0.0936)} & \textbf{\small($\pm$0.0046)} & {\small($\pm$0.0058)} & {\small($\pm$0.0129)} & {\small($\pm$0.0119)} \\
        \addlinespace
        meps19 & \textbf{0.1554} & \textbf{0.0348} & 3.1145 & 14.9875 & 0.3038 & 0.9958 & 4.4649 & 2.1043 \\
                 & \textbf{\small($\pm$0.0195)} & \textbf{\small($\pm$0.0053)} & {\small($\pm$0.6965)} & {\small($\pm$0.8288)} & {\small($\pm$0.0175)} & {\small($\pm$0.2744)} & {\small($\pm$0.7889)} & {\small($\pm$0.5149)} \\
        \addlinespace
        superconductivity & 0.4473 & \textbf{0.0905} & 80.1843 & 45.2347 & 0.8566 & \textbf{0.3481} & 0.9326 & 0.7259 \\
                 & {\small($\pm$0.0405)} & \textbf{\small($\pm$0.0091)} & {\small($\pm$10.4591)} & {\small($\pm$1.5367)} & {\small($\pm$0.0361)} & \textbf{\small($\pm$0.0142)} & {\small($\pm$0.0472)} & {\small($\pm$0.0323)} \\
        \addlinespace
        WEC & \textbf{2.4396} & \textbf{0.4487} & 179.9673 & 56.1652 & 3.4989 & 2.6285 & 8.2697 & 4.1058 \\
                 & \textbf{\small($\pm$0.0884)} & \textbf{\small($\pm$0.0224)} & {\small($\pm$16.9139)} & {\small($\pm$2.1785)} & {\small($\pm$0.1584)} & {\small($\pm$0.2685)} & {\small($\pm$0.7862)} & {\small($\pm$0.5368)} \\
        \addlinespace
        kernel & 12.2817 & \textbf{3.7622} & 253.1148 & 91.6793 & 12.4294 & \textbf{5.1647} & 14.7159 & 7.4983 \\
                 & {\small($\pm$1.3824)} & \textbf{\small($\pm$0.4880)} & {\small($\pm$44.4115)} & {\small($\pm$8.4255)} & {\small($\pm$0.7231)} & \textbf{\small($\pm$0.7348)} & {\small($\pm$2.0229)} & {\small($\pm$1.6163)} \\
        \addlinespace
        \bottomrule
    \end{tabular}
    \end{adjustbox}
\end{table}

Table~\ref{tab:smis} evaluates overall prediction interval quality using the SMIS. As formulated in Equation~\ref{eq:smis}, lower values represent superior performance, effectively rewarding narrow intervals while penalizing miscoverage proportional to $2/\alpha$. \ourmethod{} consistently achieves highly competitive or superior SMIS metrics across diverse benchmark settings. Standard NGBoost suffers from substantially degraded SMIS scores on datasets like \texttt{winered} and \texttt{star} due to heavy penalties triggered by coverage underestimation. Conversely, \ourmethod{} delivers well-calibrated intervals that achieve optimal score quality, matching or surpassing computationally intensive localized baselines while maintaining low post-fit computational complexity.

\begin{table}[htb]
    \centering
    \caption{\textbf{Standard Mean Interval Score Results.} Assessment of prediction interval quality at the $\alpha = 0.10$ miscoverage level. Lower values indicate superior performance by jointly penalizing interval width and coverage departures. Values represent the mean $\pm$ standard error at a 95\% confidence level over matched random splits. Bold entries denote the best (lowest) score per dataset.}
    \label{tab:smis}
    \begin{adjustbox}{max width=\columnwidth}
    \begin{tabular}{lcccccccc}
        \toprule
        \bfseries Dataset & \ourmethod{} & ICP & WICP & MICP & LoCART & NGB & NGB-Rescale & NGB-Tuned \\
        \midrule
        airfoil & 11.0787 & 11.1516 & 9.5658 & 11.2129 & 10.8680 & 11.2599 & 11.2690 & \textbf{9.1082} \\
                 & {\small($\pm$0.3058)} & {\small($\pm$0.3013)} & {\small($\pm$0.2106)} & {\small($\pm$0.2991)} & {\small($\pm$0.2837)} & {\small($\pm$0.2313)} & {\small($\pm$0.2226)} & \textbf{\small($\pm$0.1938)} \\
        \addlinespace
        winered & \textbf{2.7714} & \textbf{2.7750} & \textbf{2.7769} & \textbf{2.8321} & \textbf{2.7606} & 2.8980 & \textbf{2.8037} & \textbf{2.7896} \\
                 & \textbf{\small($\pm$0.0531)} & \textbf{\small($\pm$0.0529)} & \textbf{\small($\pm$0.0566)} & \textbf{\small($\pm$0.0473)} & \textbf{\small($\pm$0.0557)} & {\small($\pm$0.0746)} & \textbf{\small($\pm$0.0513)} & \textbf{\small($\pm$0.0613)} \\
        \addlinespace
        star & \textbf{953.6340} & \textbf{948.3424} & \textbf{954.8437} & 990.1628 & \textbf{954.3126} & 978.9640 & \textbf{955.1889} & \textbf{960.3432} \\
                 & \textbf{\small($\pm$10.6277)} & \textbf{\small($\pm$10.8504)} & \textbf{\small($\pm$12.1971)} & {\small($\pm$9.5289)} & \textbf{\small($\pm$10.7606)} & {\small($\pm$11.5101)} & \textbf{\small($\pm$8.7578)} & \textbf{\small($\pm$11.1007)} \\
        \addlinespace
        winewhite & 2.9386 & 2.9384 & 2.9268 & \textbf{2.8043} & 2.9222 & 2.9352 & 2.9227 & 2.9769 \\
                 & {\small($\pm$0.0349)} & {\small($\pm$0.0348)} & {\small($\pm$0.0388)} & \textbf{\small($\pm$0.0309)} & {\small($\pm$0.0343)} & {\small($\pm$0.0417)} & {\small($\pm$0.0368)} & {\small($\pm$0.0377)} \\
        \addlinespace
        cycle & 14.5443 & 14.5064 & \textbf{14.1495} & \textbf{14.0937} & 14.5150 & 15.8181 & 15.8155 & 14.6891 \\
                 & {\small($\pm$0.1842)} & {\small($\pm$0.1891)} & \textbf{\small($\pm$0.1836)} & \textbf{\small($\pm$0.1812)} & {\small($\pm$0.1871)} & {\small($\pm$0.1729)} & {\small($\pm$0.1695)} & {\small($\pm$0.1756)} \\
        \addlinespace
        electric & 0.0497 & 0.0500 & 0.0446 & 0.0486 & 0.0482 & 0.0563 & 0.0557 & \textbf{0.0368} \\
                 & {\small($\pm$0.0004)} & {\small($\pm$0.0004)} & {\small($\pm$0.0003)} & {\small($\pm$0.0004)} & {\small($\pm$0.0004)} & {\small($\pm$0.0004)} & {\small($\pm$0.0004)} & \textbf{\small($\pm$0.0003)} \\
        \addlinespace
        bike & 183.9175 & 190.3261 & \textbf{140.7535} & 150.8943 & 160.2504 & 254.8795 & 254.9643 & 226.4414 \\
                 & {\small($\pm$2.4072)} & {\small($\pm$2.4990)} & \textbf{\small($\pm$1.6935)} & {\small($\pm$1.7020)} & {\small($\pm$2.2572)} & {\small($\pm$1.7630)} & {\small($\pm$2.3001)} & {\small($\pm$2.9551)} \\
        \addlinespace
        meps19 & 73.6486 & 96.1073 & \textbf{66.8293} & \textbf{69.8958} & \textbf{70.6251} & \textbf{69.3255} & \textbf{69.8249} & \textbf{67.5690} \\
                 & {\small($\pm$2.7092)} & {\small($\pm$2.4244)} & \textbf{\small($\pm$2.1653)} & \textbf{\small($\pm$1.9774)} & \textbf{\small($\pm$1.8190)} & \textbf{\small($\pm$1.9447)} & \textbf{\small($\pm$2.1466)} & \textbf{\small($\pm$2.0106)} \\
        \addlinespace
        superconductivity & 46.2242 & 50.0430 & \textbf{38.2761} & \textbf{37.7812} & 40.2664 & 47.4750 & 47.6193 & 41.1471 \\
                 & {\small($\pm$0.5104)} & {\small($\pm$0.4471)} & \textbf{\small($\pm$0.5238)} & \textbf{\small($\pm$0.3916)} & {\small($\pm$0.4355)} & {\small($\pm$0.3446)} & {\small($\pm$0.3634)} & {\small($\pm$0.3503)} \\
        \addlinespace
        WEC & 78182.1068 & 104540.9878 & 66608.2201 & \textbf{62203.2665} & 74892.1365 & 158833.9667 & 157943.3760 & 103994.2840 \\
                 & {\small($\pm$921.7007)} & {\small($\pm$824.2052)} & {\small($\pm$659.3666)} & \textbf{\small($\pm$598.1897)} & {\small($\pm$671.7611)} & {\small($\pm$600.1490)} & {\small($\pm$717.9266)} & {\small($\pm$1106.8499)} \\
        \addlinespace
        kernel & 9.5733 & 10.3061 & \textbf{5.6754} & 6.6869 & 6.3462 & 9.9007 & 9.9981 & 10.0254 \\
                 & {\small($\pm$0.0538)} & {\small($\pm$0.0544)} & \textbf{\small($\pm$0.0262)} & {\small($\pm$0.0327)} & {\small($\pm$0.0266)} & {\small($\pm$0.0443)} & {\small($\pm$0.0516)} & {\small($\pm$0.0512)} \\
        \addlinespace
        \bottomrule
    \end{tabular}
    \end{adjustbox}
\end{table}

\clearpage

\section{Proofs of Theoretical Results}
\label{appendix:proofs}

The proofs follow the order of the theoretical development in the
main text.  We first establish the structural leaf-value bound, then treat
partitions frozen by the training data, and finally consider partitions
selected from calibration covariates.  All conditioning sigma-fields are
stated explicitly in each proof.

\begin{proof}[Proof of Proposition \ref{proposition:leaf_value_stability}]
Recall that $g_T(u)=g_0+\eta\sum_{t=1}^T h_t(u)$ with constant $g_0$.
The triangle inequality gives
\begin{align*}
|g_T(x)-g_T(z)|
&=
\left|\eta\sum_{t=1}^T\bigl(h_t(x)-h_t(z)\bigr)\right|\\
&\le
\eta\sum_{t=1}^T|h_t(x)-h_t(z)|
=d_{\mathrm{LV}}(x,z).
\end{align*}
The reverse triangle inequality then gives, for every $y\in\R$,
\[
\bigl||y-g_T(x)|-|y-g_T(z)|\bigr|
\le |g_T(x)-g_T(z)|
\le d_{\mathrm{LV}}(x,z).
\]
Finally, if $z\in\cR_k(x)$, then $h_t(z)=h_t(x)$ for every $t\le k$.
Therefore
\[
d_{\mathrm{LV}}(x,z)
=\eta\sum_{t=k+1}^T|h_t(x)-h_t(z)|
\le\eta\sum_{t=k+1}^T B_t
=b_{T,k}.
\]
\end{proof}

\begin{proof}[Proof of Theorem \ref{theorem:local_coverage}]
Fix a cell $A\in\Pi$.  Because $\Pi$ is
$\mathcal F_{\mathrm{tr}}$-measurable, $A$ is fixed after conditioning on the
training data.  Recall that
\[
I_A\coloneqq\{i\le n_{\cal}:X_i\in A\},
\qquad
M_A\coloneqq|I_A|,
\qquad
S_i\coloneqq|Y_i-g_T(X_i)|,
\]
and that
$r_m\coloneqq\lceil(m+1)(1-\alpha)\rceil$.  The cutoff $\widehat q_A$ is the
conformal empirical quantile associated with rank $r_m$, using the convention
defined in the main text.

Conditional on $\mathcal F_{\mathrm{tr}}$, $M_A=m$, and
$X_{n+1}\in A$, the $m$ selected calibration scores and the test score are
exchangeable draws from the score distribution conditional on $X\in A$.
Introduce independent continuous tie-breakers and rank the resulting
score--tie-breaker pairs. Exchangeability makes the test pair's rank uniform
on $\{1,\ldots,m+1\}$. Without tie-breaking, the corresponding conformal
p-value is super-uniform, so ties can only make the interval more conservative.
In particular, when $r_m\le m$, the event that the tie-broken rank is at most $r_m$ is
contained in $\{Y_{n+1}\in\widehat C(X_{n+1})\}$, so
\[
\Pr{Y_{n+1}\in\widehat C(X_{n+1})
\mid\mathcal F_{\mathrm{tr}},M_A=m,X_{n+1}\in A}
\ge \frac{r_m}{m+1}\ge1-\alpha.
\]
If $r_m=m+1$, the cutoff is $+\infty$ and the same lower bound is immediate.
Averaging first over $M_A$ establishes group-conditional validity. Averaging
again over the training-frozen cells yields
\begin{align*}
&\Pr{
Y_{n+1}\in\widehat C(X_{n+1})
\mid\mathcal F_{\mathrm{tr}}
}\\
&\quad=
\sum_{A\in\Pi}
\Pr{X_{n+1}\in A\mid\mathcal F_{\mathrm{tr}}}
\Pr{
Y_{n+1}\in\widehat C(X_{n+1})
\mid\mathcal F_{\mathrm{tr}},X_{n+1}\in A
}
\ge1-\alpha.
\end{align*}
\end{proof}

For a fixed training-frozen cell $A$, write
\[
F_A(t)
\coloneqq
\Pr{|Y-g_T(X)|\le t\mid\mathcal F_{\mathrm{tr}},X\in A}
\]
for its conditional score CDF.

\begin{proposition}[Realized local coverage distribution]
\label{proposition:local_threshold_variability}
Suppose that $F_A$ is continuous, condition on $M_A=m$, and assume
$r_m\le m$.  Then
\[
\widehat p_A
\coloneqq
F_A(\widehat q_A)
\sim
\mathrm{Beta}(r_m,m+1-r_m).
\]
In particular,
\[
\E{\widehat p_A\mid\mathcal F_{\mathrm{tr}},M_A=m}
=\frac{r_m}{m+1},
\qquad
\V{\widehat p_A\mid\mathcal F_{\mathrm{tr}},M_A=m}
=
\frac{r_m(m+1-r_m)}{(m+1)^2(m+2)}.
\]
\end{proposition}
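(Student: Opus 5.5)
Conditionally on $\mathcal F_{\mathrm{tr}}$ and $M_A=m$, the $m$ calibration pairs falling in $A$ are i.i.d.\ draws from the law of $(X,Y)$ conditional on $X\in A$. This follows because the calibration pairs are i.i.d.\ given $\mathcal F_{\mathrm{tr}}$ and $A$ is $\mathcal F_{\mathrm{tr}}$-measurable, so conditioning on which indices land in $A$ leaves those points i.i.d.\ with the restricted law. Their scores $S_i=|Y_i-g_T(X_i)|$, $i\in I_A$, are therefore i.i.d.\ with CDF $F_A$. This is the only place where the data-generating assumptions enter. I would state it carefully by conditioning on the index set $I_A$ itself, which is finer than conditioning on $M_A=m$, and then averaging over index sets of size $m$. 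The resulting distribution does not depend on which set occurs.

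Next I apply the probability integral transform. Since $F_A$ is continuous, $U_i\coloneqq F_A(S_i)$ are i.i.d.\ $\mathrm{Uniform}(0,1)$. Because $F_A$ is nondecreasing, $F_A(S^A_{(r)})=U_{(r)}$ for every $r$. The only subtlety is that $F_A$ may be flat on intervals, so the ordering is preserved weakly, but monotonicity still maps the $r$-th order statistic of the $S$'s to the $r$-th order statistic of the $U$'s. Since $r_m\le m$, $\widehat q_A=S^A_{(r_m)}$ is finite and $\widehat p_A=U_{(r_m)}$.

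The standard fact that the $r$-th order statistic of $m$ i.i.d.\ uniforms is $\mathrm{Beta}(r,m+1-r)$ then gives the claim. It can be seen from the density $\frac{m!}{(r-1)!(m-r)!}u^{r-1}(1-u)^{m-r}$, obtained by counting how many of the $U_i$ fall below, at, and above $u$. The mean $a/(a+b)$ and variance $ab/((a+b)^2(a+b+1))$ of a $\mathrm{Beta}(a,b)$ law, with $a=r_m$, $b=m+1-r_m$ and $a+b=m+1$, yield the displayed formulas.

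The main obstacle is the first step: justifying rigorously that conditioning on the random count $M_A$ preserves the i.i.d.\ structure within the cell. Everything after it is classical.
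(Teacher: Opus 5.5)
Your proposal is correct and follows the paper's proof essentially step for step: it reduces to i.i.d.\ within-cell scores given $\mathcal F_{\mathrm{tr}}$ and $M_A=m$, applies the probability integral transform, uses monotonicity of $F_A$ to get $F_A(S^A_{(r_m)})=U_{(r_m)}$, and invokes the Beta law of uniform order statistics. Your extra care in conditioning on the index set $I_A$ and in handling flat stretches of $F_A$ fills in details that the paper leaves implicit.
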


\begin{proof}[Proof of Proposition \ref{proposition:local_threshold_variability}]
Condition on $\mathcal F_{\mathrm{tr}}$ and $M_A=m$, with
$r_m\le m$.
Relabel the $m$ local calibration scores as
$S_1^A,\ldots,S_m^A$ and set $U_i\coloneqq F_A(S_i^A)$.
Continuity of $F_A$ and the probability integral
transform give $U_1,\ldots,U_m\overset{\mathrm{i.i.d.}}{\sim}
\mathrm{Unif}(0,1)$.  Moreover,
\[
F_A(S^A_{(r_m)})=U_{(r_m)}.
\]
The $r_m$-th order statistic of $m$ independent uniform random variables has
the $\mathrm{Beta}(r_m,m+1-r_m)$ distribution.  Its mean and variance are
\[
\frac{r_m}{m+1},
\qquad
\frac{r_m(m+1-r_m)}{(m+1)^2(m+2)},
\]
respectively.
\end{proof}

\begin{proof}[Proof of Theorem \ref{theorem:conditional_coverage}]
For concise notation, recall that
\[
p_n(x)
\coloneqq
\Pr{X\in A_n(x)\mid\mathcal F_{\mathrm{tr},n}},
\qquad
M_n(x)
\coloneqq
\sum_{i=1}^{n_{\cal,n}}
\Ind{X_i\in A_n(x)},
\]
and set
\[
\lambda_n\coloneqq n_{\cal,n}p_n(x).
\]

For use below, write
\begin{align*}
F_{n,A_n(x)}(t)
&\coloneqq
\Pr{|Y-g_n(X)|\le t
\mid\mathcal F_{\mathrm{tr},n},X\in A_n(x)},\\
F_{n,x}(t)
&\coloneqq
\Pr{|Y-g_n(x)|\le t
\mid\mathcal F_{\mathrm{tr},n},X=x}.
\end{align*}
Here $M_n(x)$ is the realized number of calibration observations in $A_n(x)$,
while $\lambda_n$ is its conditional mean.

\emph{Step 1: the local calibration count diverges.}
Because the partition is frozen by the training sample, conditionally on
$\mathcal F_{\mathrm{tr},n}$ the indicators in the definition of $M_n(x)$ are
independent Bernoulli variables with success probability $p_n(x)$.  Hence
\[
M_n(x)\mid\mathcal F_{\mathrm{tr},n}
\sim
\mathrm{Binomial}(n_{\cal,n},p_n(x)),
\qquad
\E{M_n(x)\mid\mathcal F_{\mathrm{tr},n}}=\lambda_n.
\]

We show that $M_n(x)\xrightarrow{p}\infty$.  Fix an integer $K\ge0$ and choose
$L>2K$.  On the event $\{\lambda_n\ge L\}$, we have
$K<\lambda_n/2$.  Consequently,
\[
\{M_n(x)\le K\}
\subseteq
\{M_n(x)\le\lambda_n/2\}
\qquad\text{on }\{\lambda_n\ge L\}.
\]
For a binomial variable with mean $\lambda_n$, the lower-tail Chernoff bound
states that, for every $\delta\in(0,1)$,
\[
\Pr{
M_n(x)\le(1-\delta)\lambda_n
\mid\mathcal F_{\mathrm{tr},n}
}
\le
\exp\left(-\frac{\delta^2\lambda_n}{2}\right).
\]
Taking $\delta=1/2$ and using the event inclusion above gives
\begin{align*}
\Pr{M_n(x)\le K\mid\mathcal F_{\mathrm{tr},n}}
&\le
\Pr{M_n(x)\le\lambda_n/2\mid\mathcal F_{\mathrm{tr},n}}\\
&\le
\exp(-\lambda_n/8)
\le
\exp(-L/8),
\end{align*}
on $\{\lambda_n\ge L\}$.  We now separate two possibilities.  If
$\lambda_n<L$, the conditional mean is not yet guaranteed to be large.  If
$\lambda_n\ge L$, the preceding Chernoff argument bounds the probability of
$M_n(x)\le K$ by $e^{-L/8}$.  Therefore, decomposing according to these two
events and averaging over the training sample yields
\begin{align*}
\Pr{M_n(x)\le K}
&=
\Pr{M_n(x)\le K,\lambda_n<L}
+
\Pr{M_n(x)\le K,\lambda_n\ge L}\\
&\le
\Pr{\lambda_n<L}
+e^{-L/8}\Pr{\lambda_n\ge L}\\
&\le
\Pr{\lambda_n<L}+e^{-L/8}.
\end{align*}
Equivalently, the only two ways in which the realized count can remain below
$K$ are that its conditional mean has not yet exceeded $L$, or that a
lower-tail deviation occurs despite the large conditional mean.

The assumption $\lambda_n=n_{\cal,n}p_n(x)\xrightarrow{p}\infty$ means that,
for every fixed $L$,
\[
\Pr{\lambda_n<L}\longrightarrow0.
\]
Consequently,
\[
\limsup_{n\to\infty}\Pr{M_n(x)\le K}\le e^{-L/8}.
\]
Letting $L\to\infty$ proves that $\Pr{M_n(x)\le K}\to0$ for every fixed $K$,
which is precisely $M_n(x)\xrightarrow{p}\infty$.

Set
\[
m_\alpha
\coloneqq
\left\lceil\frac{1}{\alpha}\right\rceil-1.
\]
If $m\ge m_\alpha$, then $\alpha(m+1)\ge1$ and hence
\[
r_m
=
\left\lceil(m+1)(1-\alpha)\right\rceil
\le m.
\]
Thus $m_\alpha$ is a fixed threshold beyond which $r_m\le m$.
Since Step 1 gives $M_n(x)\xrightarrow{p}\infty$, the event
$\{M_n(x)<m_\alpha\}$ has probability tending to zero.

\emph{Step 2: the realized cell-level coverage converges.}
Let
\[
U_n
\coloneqq
F_{n,A_n(x)}(\widehat q_{A_n(x)}),
\]
where $F_{n,A_n(x)}$ is the conditional score CDF in $A_n(x)$.  Conditional on
$\mathcal F_{\mathrm{tr},n}$ and $M_n(x)=m\ge m_\alpha$,
Proposition~\ref{proposition:local_threshold_variability} gives
\[
U_n\sim\mathrm{Beta}(r_m,m+1-r_m).
\]
Writing $\mu_m\coloneqq r_m/(m+1)$, the ceiling definition of $r_m$ implies
\[
0\le \mu_m-(1-\alpha)<\frac{1}{m+1},
\]
while the Beta variance satisfies
\[
\V{U_n\mid\mathcal F_{\mathrm{tr},n},M_n(x)=m}
=
\frac{r_m(m+1-r_m)}{(m+1)^2(m+2)}
\le
\frac{1}{4(m+2)}.
\]
Fix $\varepsilon>0$ and choose an integer $K\ge m_\alpha$ sufficiently large
that $1/(K+1)<\varepsilon/2$.  For every $m\ge K$, the bias bound and
Chebyshev's inequality give
\begin{align*}
&\Pr{
|U_n-(1-\alpha)|>\varepsilon
\mid\mathcal F_{\mathrm{tr},n},M_n(x)=m
}\\
&\quad\le
\Pr{
|U_n-\mu_m|>\varepsilon/2
\mid\mathcal F_{\mathrm{tr},n},M_n(x)=m
}\\
&\quad\le
\frac{4}{\varepsilon^2}
\V{U_n\mid\mathcal F_{\mathrm{tr},n},M_n(x)=m}
\le
\frac{1}{\varepsilon^2(m+2)}
\le
\frac{1}{\varepsilon^2(K+2)}.
\end{align*}

To make the averaging step explicit, let
\[
\mathcal E_{n,\varepsilon}
\coloneqq
\{|U_n-(1-\alpha)|>\varepsilon\}.
\]
Split this event according to whether the realized local calibration count is
smaller than $K$.  By iterated expectation,
\begin{align*}
\Pr{\mathcal E_{n,\varepsilon}}
&=
\Pr{\mathcal E_{n,\varepsilon},M_n(x)<K}\\
&\quad+
\E{
\Ind{M_n(x)\ge K}
\Pr{\mathcal E_{n,\varepsilon}
\mid\mathcal F_{\mathrm{tr},n},M_n(x)}
}\\
&\le
\Pr{M_n(x)<K}
+
\frac{1}{\varepsilon^2(K+2)}\Pr{M_n(x)\ge K}\\
&\le
\Pr{M_n(x)<K}
+
\frac{1}{\varepsilon^2(K+2)}.
\end{align*}
The first term is the probability that the cell contains too few calibration
points for the uniform Beta bound to be useful.  This is precisely where Step
1 is used: since $M_n(x)\xrightarrow{p}\infty$, for every fixed $K$,
$\Pr{M_n(x)<K}\to0$.

\emph{Step 3: transfer from the cell to the fixed point.}
Because the test response is independent of the calibration sample
conditionally on the training data and $X=x$, the realized pointwise coverage
is
\[
\operatorname{Cov}_{n,x}(\widehat C_n)
\coloneqq
\Pr{Y\in\widehat C_n(x)
\mid X=x,\mathcal F_{\mathrm{tr},n},\cD_{\cal,n}}
=
F_{n,x}(\widehat q_{A_n(x)}).
\]
Adding and subtracting
$F_{n,A_n(x)}(\widehat q_{A_n(x)})=U_n$ gives
\begin{align*}
\left|
\operatorname{Cov}_{n,x}(\widehat C_n)-(1-\alpha)
\right|
&\le
\left|
F_{n,x}(\widehat q_{A_n(x)})
-F_{n,A_n(x)}(\widehat q_{A_n(x)})
\right|+
|U_n-(1-\alpha)|\\
&\le
\Delta_n(x)
+|U_n-(1-\alpha)|.
\end{align*}
The first term converges to zero in probability by assumption, and the second
does so by Step 2.  Their sum therefore converges to zero in probability,
which proves the claimed asymptotic pointwise coverage.
\end{proof}

\subsection{\texorpdfstring{Theoretical Guarantees for Resolution Selection}{Theoretical Guarantees for Resolution Selection}}
\label{sec:calibration_adaptive_partition}

\begin{proof}[Proof of Theorem \ref{theorem:calibration_covariate_adaptive}]

We prove two claims.  First, for a finite calibration sample, we must bound
the difference between the realized coverage at $x$ and the target
$1-\alpha$.  The main difficulty is that, after fixing the calibration
covariates, the local scores are independent but need not be identically
distributed.  We therefore need both a bound on their discrepancy from the
target score distribution, provided by $\Omega_n(x)$, and concentration of
their empirical CDF, provided by the local sample size $M_n(x)$.  We condition
on the calibration covariates, use a probability-integral transform and the
Dvoretzky--Kiefer--Wolfowitz inequality to control the empirical quantile,
and then account for the conformal ceiling correction.  Second, we let
$M_n(x)$ diverge and $\Omega_n(x)$ vanish to obtain the asymptotic conclusion.

Condition on $\mathfrak G_n$ and abbreviate
\[
A=A_n(x),
\qquad
M=M_n(x),
\qquad
\omega=\Omega_n(x),
\qquad
F=F_{n,x}.
\]
The selected cell and the index set
$I_A=\{i\le n_{\cal,n}:X_i\in A\}$ are now fixed.  Write
$I_A=\{i_1,\ldots,i_M\}$, set
$S_j^A=|Y_{i_j}-g_n(X_{i_j})|$, and let $G_j$ be its conditional CDF.
Response-blind selection ensures that, conditionally on $\mathfrak G_n$,
these scores are independent and
\[
G_j=F_{n,X_{i_j}},
\qquad
\sup_{t\ge0}|G_j(t)-F(t)|\le\omega.
\]
A fresh score at $x$ is independent of the calibration responses and has CDF
$F$.

Define the empirical CDF of the local scores by
\[
\widehat F_A(t)
\coloneqq
\frac1M\sum_{j=1}^M\Ind{S_j^A\le t},
\]
which we compare with the target CDF $F$.  Define $U_j=G_j(S_j^A)$ and
\[
H_M(u)
\coloneqq
\frac1M\sum_{j=1}^M\Ind{U_j\le u}.
\]
By continuity and the probability integral transform, for every
$u\in[0,1]$,
\[
\Pr{U_j\le u\mid\mathfrak G_n}=u.
\]
Because the scores $S_1^A,\ldots,S_M^A$ are conditionally independent, the
variables $U_1,\ldots,U_M$ are conditionally i.i.d. uniform on $[0,1]$.

For every $j$ and $t\ge0$, the definition of $\omega$ gives
\[
F(t)-\omega
\le G_j(t)\le
F(t)+\omega.
\]
After truncating the endpoints to $[0,1]$, monotonicity of the indicator
function yields
\begin{align*}
\Ind{U_j\le(F(t)-\omega)\vee0}
&\le \Ind{U_j\le G_j(t)}\\
&=\Ind{S_j^A\le t}\\
&\le \Ind{U_j\le(F(t)+\omega)\wedge1}
\end{align*}
almost surely.  Summing these inequalities over $j$ and dividing by $M$
gives
\[
H_M\bigl((F(t)-\omega)\vee0\bigr)
\le
\widehat F_A(t)
\le
H_M\bigl((F(t)+\omega)\wedge1\bigr).
\]

Now suppose that
$\sup_{u\in[0,1]}|H_M(u)-u|\le\varepsilon$.  Applying the lower side of
the last display gives
\begin{align*}
\widehat F_A(t)
&\ge H_M\bigl((F(t)-\omega)\vee0\bigr)\\
&\ge (F(t)-\omega)\vee0-\varepsilon\\
&\ge F(t)-\omega-\varepsilon.
\end{align*}
Similarly, the upper side gives
\begin{align*}
\widehat F_A(t)
&\le H_M\bigl((F(t)+\omega)\wedge1\bigr)\\
&\le (F(t)+\omega)\wedge1+\varepsilon\\
&\le F(t)+\omega+\varepsilon.
\end{align*}
Hence
\[
\sup_{t\ge0}|\widehat F_A(t)-F(t)|
\le\omega+\varepsilon.
\]

The Dvoretzky--Kiefer--Wolfowitz inequality applied to the conditional
uniform sample gives
\[
\Pr{
\sup_{u\in[0,1]}|H_M(u)-u|>\varepsilon
\,\middle|\,\mathfrak G_n
}
\le 2e^{-2M\varepsilon^2}.
\]
Taking
\[
\varepsilon
=
\sqrt{\frac{\log(2/\delta)}{2M}}
\]
makes the right-hand side equal to $\delta$.  Thus, with conditional
probability at least $1-\delta$,
\[
\sup_{t\ge0}|\widehat F_A(t)-F(t)|
\le
\omega+\sqrt{\frac{\log(2/\delta)}{2M}}.
\]

Continuity rules out ties.  Using the standard extended-real convention for
the conformal empirical quantile, the ceiling correction gives
\[
\left|
\widehat F_A(\widehat q_A)-(1-\alpha)
\right|
<\frac{2}{M}.
\]

Once the complete calibration sample is observed, $\widehat q_A$ is fixed.
For an independent test response at $x$,
\begin{align*}
\operatorname{Cov}_{n,x}(\widehat C_n)
&=
\Pr{|Y-g_n(x)|\le\widehat q_A
\mid X=x,\mathcal F_{\mathrm{tr},n},\cD_{\cal,n}}\\
&=F(\widehat q_A).
\end{align*}
Consequently,
\begin{align*}
\left|
\operatorname{Cov}_{n,x}(\widehat C_n)-(1-\alpha)
\right|
&=
\left|F(\widehat q_A)-(1-\alpha)\right|\\
&\le
\left|F(\widehat q_A)-\widehat F_A(\widehat q_A)\right|\\
&\quad+
\left|\widehat F_A(\widehat q_A)-(1-\alpha)\right|.
\end{align*}
Using the two bounds above, with conditional probability at least
$1-\delta$ this is at most
\[
\omega
+\sqrt{\frac{\log(2/\delta)}{2M}}
+\frac{2}{M},
\]
which is the stated finite-calibration bound.

\end{proof}

\subsection{Sufficient Conditions for Boosting-Induced Cells}
\label{appendix:boosting_sufficient_conditions}

We now translate the abstract homogeneity conditions in
Theorems~\ref{theorem:conditional_coverage} and
\ref{theorem:calibration_covariate_adaptive} into conditions on cells induced
by the fitted ensemble.  The common task is to control the uniform discrepancy
$\Omega_n(x)$ through the cell geometry and the variation of the fitted
center.  For a training-frozen cell,
Proposition~\ref{proposition:leaf_tail_score_homogeneity} then gives
$\Delta_n(x)\le\Omega_n(x)$, and the expected local size
$n_{\cal,n}p_n(x)$ controls the calibration count.  For a partition selected
from calibration covariates, Theorem~\ref{theorem:calibration_covariate_adaptive}
uses $\Omega_n(x)$ directly together with the realized count $M_n(x)$.  We
therefore establish the common homogeneity bound first and introduce the
appropriate count condition when applying each theorem.

Fix $x\in\mathcal{S}_X$, where $\mathcal{S}_X$ is the support of $X$, and let
$A_n(x)$ be the boosting-induced cell containing $x$.  Recall that
\[
g_n(u)=g_{0,n}+\eta\sum_{t=1}^{T_n}h_{t,n}(u),
\]
where $g_{0,n}$ is constant.  For $z\in\cX$, define
\begin{align*}
H_{n,z}(y)
&\coloneqq
\Pr{Y\le y\mid\mathcal{F}_{\mathrm{tr},n},X=z},\\
F_{n,z}(t)
&\coloneqq
\Pr{|Y-g_n(z)|\le t
\mid\mathcal{F}_{\mathrm{tr},n},X=z}.
\end{align*}
The cell radius, fitted-center oscillation, and uniform score discrepancy are
\begin{align*}
D_n(x)
&\coloneqq
\sup_{z\in A_n(x)\cap\mathcal{S}_X}\|z-x\|,\\
V_n(x)
&\coloneqq
\sup_{z\in A_n(x)\cap\mathcal{S}_X}|g_n(z)-g_n(x)|,\\
\Omega_n(x)
&\coloneqq
\sup_{z\in A_n(x)\cap\mathcal{S}_X}
\sup_{t\ge0}|F_{n,z}(t)-F_{n,x}(t)|.
\end{align*}

The following response-regularity condition connects covariate and
fitted-center variation to score-distribution variation in both regimes.

\begin{assumption}[Conditional response regularity]
\label{assumption:conditional_response_regularity}
For the fixed point \(x\), there exist constants \(L_x,\bar f_x<\infty\),
uniform in \(n\), such that, for every
\(z\in A_n(x)\cap\mathcal{S}_X\),
\[
\sup_{y\in\R}|H_{n,z}(y)-H_{n,x}(y)|
\le L_x\|z-x\|.
\]
Moreover, the conditional law of \(Y\) given
\((\mathcal{F}_{\mathrm{tr},n},X=z)\) has a density \(f_{n,z}\) satisfying
\(\sup_y f_{n,z}(y)\le\bar f_x\).
\end{assumption}

The next proposition converts response regularity, cell radius, and
within-cell center oscillation into a bound on the score distributions.

\begin{proposition}[Score homogeneity within boosting-induced cells]
\label{proposition:leaf_tail_score_homogeneity}
Under Assumption~\ref{assumption:conditional_response_regularity},
\[
\Omega_n(x)
\le
2L_xD_n(x)+2\bar f_xV_n(x).
\]
If \(A_n(x)\) is \(\mathcal{F}_{\mathrm{tr},n}\)-measurable and has positive
conditional probability, then
\[
\Delta_n(x)\le\Omega_n(x),
\]
where $\Delta_n(x)$ is the cell-average discrepancy defined in
Theorem~\ref{theorem:conditional_coverage}.
\end{proposition}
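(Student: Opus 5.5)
We prove the two inequalities separately. Throughout, fix $z\in A_n(x)\cap\mathcal S_X$ and $t\ge0$, and write $a\coloneqq g_n(z)$ and $b\coloneqq g_n(x)$. Conditionally on $(\mathcal F_{\mathrm{tr},n},X=z)$, the fitted predictor is frozen, so the score CDF is a difference of response CDFs:
\[
F_{n,z}(t)=H_{n,z}(a+t)-H_{n,z}(a-t).
\]
The absolutely continuous law of $Y$ lets us ignore the left limit at $a-t$. Likewise $F_{n,x}(t)=H_{n,x}(b+t)-H_{n,x}(b-t)$.

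To compare the two, we insert the intermediate quantity $H_{n,z}(b+t)-H_{n,z}(b-t)$, which keeps the response law at $z$ but uses the center $b$. This splits $|F_{n,z}(t)-F_{n,x}(t)|$ into two pieces:
\[
\bigl|[H_{n,z}(a+t)-H_{n,z}(b+t)]-[H_{n,z}(a-t)-H_{n,z}(b-t)]\bigr|
\;+\;
\bigl|[H_{n,z}(b+t)-H_{n,x}(b+t)]-[H_{n,z}(b-t)-H_{n,x}(b-t)]\bigr|.
\]

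The first piece is a center shift under a fixed law. By the density bound in Assumption~\ref{assumption:conditional_response_regularity}, $H_{n,z}$ is $\bar f_x$-Lipschitz. Each bracket is therefore at most $\bar f_x|a-b|\le\bar f_xV_n(x)$, and the first piece is at most $2\bar f_xV_n(x)$.

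The second piece is a change of law at fixed arguments. The Kolmogorov-distance condition in Assumption~\ref{assumption:conditional_response_regularity} bounds each bracket by $L_x\|z-x\|\le L_xD_n(x)$, so the second piece is at most $2L_xD_n(x)$.

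Both bounds are uniform in $t$ and in $z$. Taking suprema gives $\Omega_n(x)\le 2L_xD_n(x)+2\bar f_xV_n(x)$.

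For the second claim, suppose $A=A_n(x)$ is $\mathcal F_{\mathrm{tr},n}$-measurable with $p_n(x)>0$. Then the cell score CDF is a mixture of the pointwise ones. Conditioning on $X$ inside the cell gives
\[
\Pr{|Y-g_n(X)|\le t\mid\mathcal F_{\mathrm{tr},n},X\in A}
=\E{F_{n,X}(t)\mid\mathcal F_{\mathrm{tr},n},X\in A}.
\]
The conditioning law is supported on $A\cap\mathcal S_X$ almost surely. Subtracting $F_{n,x}(t)$ and applying Jensen's inequality (that is, $|\mathbb E[\cdot]|\le\mathbb E|\cdot|$) bounds the difference by
\[
\E{|F_{n,X}(t)-F_{n,x}(t)|\mid\mathcal F_{\mathrm{tr},n},X\in A}\le\Omega_n(x).
\]
Taking the supremum over $t\ge0$ gives $\Delta_n(x)\le\Omega_n(x)$.

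The main obstacle is measure-theoretic rather than computational. The mixture identity needs a regular conditional distribution of $Y$ given $X$, chosen jointly measurable in $z$, so that $z\mapsto F_{n,z}(t)$ is measurable. It also needs the supremum defining $\Omega_n(x)$ to bound $F_{n,X}$ for almost every $X$ in the cell. We handle this by fixing a version of the regular conditional law and using that $X\in\mathcal S_X$ almost surely. The pointwise inequality then holds on a full-measure set, which is all the expectation requires.
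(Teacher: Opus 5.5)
Your proof is correct and follows the paper's argument. Both go through a triangle inequality via an intermediate CDF, combining the Kolmogorov bound at two fixed endpoints with the density bound for a center shift, and both then use the mixture identity plus $|\mathbb{E}[\cdot]|\le\mathbb{E}|\cdot|$ to get $\Delta_n(x)\le\Omega_n(x)$. The only difference is that your intermediate is the mirror image of the paper's: the paper uses $H_{n,x}(g_n(z)\pm t)$, so its Lipschitz step needs the density bound only at $x$, whereas your $H_{n,z}(g_n(x)\pm t)$ needs it at every $z$ in the cell, which Assumption~\ref{assumption:conditional_response_regularity} supplies, so both routes work.
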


\begin{proof}
Fix a realization of the selected cell and take
\(z\in A_n(x)\cap\mathcal{S}_X\) and \(t\ge0\).  Introduce
\[
\widetilde F_{n,x;z}(t)
\coloneqq
\Pr{|Y-g_n(z)|\le t
\mid\mathcal{F}_{\mathrm{tr},n},X=x}.
\]
Because the conditional response distributions have densities, their CDFs
are continuous.  Therefore
\begin{align*}
F_{n,z}(t)
&=H_{n,z}(g_n(z)+t)-H_{n,z}(g_n(z)-t),\\
\widetilde F_{n,x;z}(t)
&=H_{n,x}(g_n(z)+t)-H_{n,x}(g_n(z)-t).
\end{align*}
Subtracting these two identities and applying the triangle inequality gives
\begin{align*}
&|F_{n,z}(t)-\widetilde F_{n,x;z}(t)|\\
&\quad=
\Big|
H_{n,z}(g_n(z)+t)-H_{n,x}(g_n(z)+t)\\
&\qquad\quad-
H_{n,z}(g_n(z)-t)+H_{n,x}(g_n(z)-t)
\Big|\\
&\quad\le
|H_{n,z}(g_n(z)+t)-H_{n,x}(g_n(z)+t)|\\
&\qquad+
|H_{n,z}(g_n(z)-t)-H_{n,x}(g_n(z)-t)|\\
&\quad\le 2L_x\|z-x\|,
\end{align*}
where the last inequality follows from conditional response regularity at
the two interval endpoints.

Next, both $\widetilde F_{n,x;z}$ and $F_{n,x}$ use the response law at $x$,
but their intervals are centered at $g_n(z)$ and $g_n(x)$, respectively.  We
have
\begin{align*}
F_{n,x}(t)
&=H_{n,x}(g_n(x)+t)-H_{n,x}(g_n(x)-t).
\end{align*}
Since the density of the response at $x$ is bounded by $\bar f_x$, for every
$a,b\in\R$,
\[
|H_{n,x}(a)-H_{n,x}(b)|
\le \bar f_x|a-b|.
\]
Applying this bound at the upper and lower endpoints yields
\begin{align*}
&|\widetilde F_{n,x;z}(t)-F_{n,x}(t)|\\
&\quad\le
|H_{n,x}(g_n(z)+t)-H_{n,x}(g_n(x)+t)|\\
&\qquad+
|H_{n,x}(g_n(z)-t)-H_{n,x}(g_n(x)-t)|\\
&\quad\le
2\bar f_x|g_n(z)-g_n(x)|.
\end{align*}

Finally, another application of the triangle inequality gives
\begin{align*}
|F_{n,z}(t)-F_{n,x}(t)|
&\le
|F_{n,z}(t)-\widetilde F_{n,x;z}(t)|\\
&\quad+
|\widetilde F_{n,x;z}(t)-F_{n,x}(t)|\\
&\le
2L_xD_n(x)
+2\bar f_x|g_n(z)-g_n(x)|\\
&\le
2L_xD_n(x)+2\bar f_xV_n(x).
\end{align*}
Taking the supremum over $z\in A_n(x)\cap\mathcal S_X$ and $t\ge0$ proves
the bound on $\Omega_n(x)$.

If the cell is training-frozen, its conditional score CDF satisfies
\[
F_{n,A_n(x)}(t)
=
\E{F_{n,X}(t)
\mid\mathcal{F}_{\mathrm{tr},n},X\in A_n(x)}.
\]
Consequently,
\begin{align*}
&|F_{n,A_n(x)}(t)-F_{n,x}(t)|\\
&\quad=
\left|
\E{F_{n,X}(t)-F_{n,x}(t)
\mid\mathcal{F}_{\mathrm{tr},n},X\in A_n(x)}
\right|\\
&\quad\le
\E{|F_{n,X}(t)-F_{n,x}(t)|
\mid\mathcal{F}_{\mathrm{tr},n},X\in A_n(x)}\\
&\quad\le\Omega_n(x).
\end{align*}
Taking the supremum over \(t\) proves \(\Delta_n(x)\le\Omega_n(x)\).
\end{proof}

Proposition~\ref{proposition:leaf_tail_score_homogeneity} isolates the two
approximation errors that must vanish.  The term $D_n(x)$ measures geometric
localization of the selected cell, whereas $V_n(x)$ measures the variation of
the fitted center inside that cell.  We now give boosting-specific conditions
for controlling these two terms.

\begin{assumption}[Shrinking selected cell]
\label{assumption:shrinking_cell_diameter}
For the fixed point \(x\),
\[
D_n(x)\xrightarrow{p}0.
\]
\end{assumption}

This assumption is natural for tree-induced cells under the usual geometric
localization regime.  To see the connection, let $\Lambda_{t,n}(x)$ denote the leaf region
of tree $t$ containing $x$.  A prefix cell can be written as
\[
\cR_{n,k_n(x)}(x)
=
\bigcap_{t=1}^{k_n(x)}\Lambda_{t,n}(x).
\]
Therefore, for every $t\le k_n(x)$,
$\cR_{n,k_n(x)}(x)\subseteq \Lambda_{t,n}(x)$, and hence
\begin{align*}
D_n(x)
&=
\sup_{z\in\cR_{n,k_n(x)}(x)\cap\mathcal S_X}\|z-x\|\\
&\le
\min_{1\le t\le k_n(x)}
\operatorname{diam}\bigl(\Lambda_{t,n}(x)\cap\mathcal S_X\bigr).
\end{align*}
Thus shrinking leaves among the first $k_n(x)$ trees are sufficient for
$D_n(x)\to0$.  More generally, even if no individual leaf shrinks in every
coordinate, the intersection may shrink when the ensemble progressively
separates points around $x$.  Increasing the prefix depth cannot increase
$D_n(x)$, although geometric shrinkage is an asymptotic condition on the
tree-growing scheme rather than an automatic property of every fitted
ensemble.

We next control the fitted-center variation.  For a prefix cell
$A_n(x)=\cR_{n,k_n(x)}(x)$, define
\[
B_{t,n}
\coloneqq
\sup_{u,z\in\cX}|h_{t,n}(u)-h_{t,n}(z)|,
\qquad
b_n(x)
\coloneqq
\eta\sum_{t=k_n(x)+1}^{T_n}B_{t,n}.
\]
Thus, $b_n(x)$ is the sequence-level counterpart of $b_{T,k}$,
with $T=T_n$, $k=k_n(x)$, and $B_t=B_{t,n}$.

\begin{assumption}[Vanishing boosting tail]
\label{assumption:vanishing_boosting_tail}
For a prefix cell and the fixed point $x$,
\[
b_n(x)\xrightarrow{p}0.
\]
\end{assumption}

For every $z\in A_n(x)$, the first $k_n(x)$ leaf identifiers agree, so
$h_{t,n}(z)=h_{t,n}(x)$ for $t\le k_n(x)$.  Consequently,
\begin{align*}
|g_n(z)-g_n(x)|
&=
\left|
\eta\sum_{t=1}^{T_n}
\bigl(h_{t,n}(z)-h_{t,n}(x)\bigr)
\right|\\
&=
\left|
\eta\sum_{t=k_n(x)+1}^{T_n}
\bigl(h_{t,n}(z)-h_{t,n}(x)\bigr)
\right|\\
&\le
\eta\sum_{t=k_n(x)+1}^{T_n}
|h_{t,n}(z)-h_{t,n}(x)|\\
&\le
\eta\sum_{t=k_n(x)+1}^{T_n}B_{t,n}\\
&=b_n(x).
\end{align*}
Taking the supremum over $z\in A_n(x)\cap\mathcal S_X$ gives
\[
V_n(x)\le b_n(x).
\]
The tail condition formalizes the idea that the trees after the selected
prefix contribute progressively less variation.  For each $n$, it holds
trivially with $b_n(x)=0$ when $k_n(x)=T_n$.  When $k_n(x)<T_n$, it follows,
for example, if
\[
\eta\bigl(T_n-k_n(x)\bigr)
\max_{k_n(x)<t\le T_n}B_{t,n}
\xrightarrow{p}0,
\]
because the left-hand side bounds $b_n(x)$.  This is compatible with boosting
regimes in which later trees make increasingly small residual corrections,
but it is stated explicitly because such decay is not guaranteed for every
fitting scheme.

For a cell produced by merging, the analogous control is provided directly
by the leaf-value distance.  Define
\[
R_n^{\mathrm{LV}}(x)
\coloneqq
\sup_{z\in A_n(x)\cap\mathcal{S}_X}
d_{\mathrm{LV},n}(x,z),
\qquad
d_{\mathrm{LV},n}(x,z)
\coloneqq
\eta\sum_{t=1}^{T_n}|h_{t,n}(x)-h_{t,n}(z)|.
\]

\begin{assumption}[Vanishing leaf-value radius after merging]
\label{assumption:vanishing_leaf_value_radius}
For a merged cell and the fixed point $x$,
\[
R_n^{\mathrm{LV}}(x)\xrightarrow{p}0.
\]
\end{assumption}

For every $z$ in the merged cell, the triangle inequality gives
\begin{align*}
|g_n(z)-g_n(x)|
&=
\left|
\eta\sum_{t=1}^{T_n}
\bigl(h_{t,n}(z)-h_{t,n}(x)\bigr)
\right|\\
&\le
\eta\sum_{t=1}^{T_n}|h_{t,n}(z)-h_{t,n}(x)|\\
&=d_{\mathrm{LV},n}(x,z)\\
&\le R_n^{\mathrm{LV}}(x).
\end{align*}
Taking the supremum over $z\in A_n(x)\cap\mathcal S_X$ gives
\[
V_n(x)\le R_n^{\mathrm{LV}}(x).
\]
Hence a merging rule whose admissible leaf-value radius vanishes preserves
within-cell center stability.  This condition must still be accompanied by
geometric shrinkage, since similarity of fitted values alone does not imply
that the covariates in the merged cell are close to $x$.

\begin{proposition}[Boosting conditions for score homogeneity]
\label{proposition:boosting_score_homogeneity}
Suppose that
Assumptions~\ref{assumption:conditional_response_regularity} and
\ref{assumption:shrinking_cell_diameter} hold.  Suppose in addition that
either
\begin{enumerate}
    \item[(i)] $A_n(x)$ is a prefix cell and
    Assumption~\ref{assumption:vanishing_boosting_tail} holds; or
    \item[(ii)] $A_n(x)$ is a merged cell and
    Assumption~\ref{assumption:vanishing_leaf_value_radius} holds.
\end{enumerate}
Then
\[
\Omega_n(x)\xrightarrow{p}0.
\]
If, in addition, $A_n(x)$ is $\mathcal F_{\mathrm{tr},n}$-measurable and has
positive conditional probability, then
\[
\Delta_n(x)\xrightarrow{p}0.
\]
\end{proposition}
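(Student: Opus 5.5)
The argument will be a direct combination of Proposition~\ref{proposition:leaf_tail_score_homogeneity} with the two center-oscillation bounds established just before the statement.

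First, Assumption~\ref{assumption:conditional_response_regularity} holds, so Proposition~\ref{proposition:leaf_tail_score_homogeneity} gives, for every $n$ and every realization of the selected cell,
\[
0\le\Omega_n(x)\le 2L_xD_n(x)+2\bar f_xV_n(x),
\]
with $L_x,\bar f_x$ constants uniform in $n$. The task therefore reduces to showing $D_n(x)\xrightarrow{p}0$ and $V_n(x)\xrightarrow{p}0$. The first is exactly Assumption~\ref{assumption:shrinking_cell_diameter}. For the second, we split into the two cases.

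In case (i) the cell is a prefix cell $\cR_{n,k_n(x)}(x)$, and the leaf-agreement computation above gives the deterministic bound $V_n(x)\le b_n(x)$. Assumption~\ref{assumption:vanishing_boosting_tail} then yields $V_n(x)\xrightarrow{p}0$. In case (ii) the cell is a merged cell, the triangle-inequality computation gives $V_n(x)\le R_n^{\mathrm{LV}}(x)$, and Assumption~\ref{assumption:vanishing_leaf_value_radius} again yields $V_n(x)\xrightarrow{p}0$.

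We then combine the pieces. Since $V_n(x)\ge0$ is dominated by a nonnegative sequence tending to zero in probability, for every $\varepsilon>0$ we have $\Pr{V_n(x)>\varepsilon}\le\Pr{b_n(x)>\varepsilon}\to0$, and analogously with $R_n^{\mathrm{LV}}(x)$ in case (ii). Linear combinations with fixed constants of sequences tending to zero in probability tend to zero in probability, by a union bound on $\{2L_xD_n>\varepsilon/2\}\cup\{2\bar f_xV_n>\varepsilon/2\}$. Sandwiching then gives $\Omega_n(x)\xrightarrow{p}0$.

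For the final claim, when $A_n(x)$ is $\mathcal F_{\mathrm{tr},n}$-measurable with positive conditional probability, the second part of Proposition~\ref{proposition:leaf_tail_score_homogeneity} gives $0\le\Delta_n(x)\le\Omega_n(x)$ pointwise. Hence $\Pr{\Delta_n(x)>\varepsilon}\le\Pr{\Omega_n(x)>\varepsilon}\to0$.

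No step is a real obstacle. The only point needing care is measurability: the domination inequalities must hold surely, realization by realization, so that probabilities of the events can be compared. This is the case because Proposition~\ref{proposition:leaf_tail_score_homogeneity} and the bounds $V_n\le b_n$ and $V_n\le R_n^{\mathrm{LV}}$ are all deterministic statements for each fixed realization of the fitted ensemble and cell. We will also note that $L_x$ and $\bar f_x$ are uniform in $n$, which is what prevents the constants from growing and breaking the limit.
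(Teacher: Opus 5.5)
Your proposal is correct and follows essentially the same argument as the paper: you bound $V_n(x)$ by $b_n(x)$ or $R_n^{\mathrm{LV}}(x)$ in the two cases, insert this into $\Omega_n(x)\le 2L_xD_n(x)+2\bar f_xV_n(x)$ from Proposition~\ref{proposition:leaf_tail_score_homogeneity}, conclude with a union bound, and then use $0\le\Delta_n(x)\le\Omega_n(x)$ for training-frozen cells. The only cosmetic difference is the threshold: the paper uses the single value $\delta=\varepsilon/(2(1+L_x+\bar f_x))$ for both $D_n(x)$ and $V_n(x)$, which covers the degenerate cases $L_x=0$ or $\bar f_x=0$ automatically, whereas under your $\varepsilon/2$-split the corresponding event is simply empty.
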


\begin{proof}
In case (i), the prefix-tail calculation above gives
\[
V_n(x)\le b_n(x)\xrightarrow{p}0.
\]
In case (ii), the leaf-value-radius calculation gives
\[
V_n(x)\le R_n^{\mathrm{LV}}(x)\xrightarrow{p}0.
\]
Thus $V_n(x)\xrightarrow{p}0$ in either case.  By
Proposition~\ref{proposition:leaf_tail_score_homogeneity},
\[
0\le\Omega_n(x)
\le2L_xD_n(x)+2\bar f_xV_n(x).
\]
Fix $\varepsilon>0$ and set
\[
\delta
\coloneqq
\frac{\varepsilon}{2(1+L_x+\bar f_x)}.
\]
If $D_n(x)\le\delta$ and $V_n(x)\le\delta$, then
\begin{align*}
2L_xD_n(x)+2\bar f_xV_n(x)
&\le2(L_x+\bar f_x)\delta\\
&=
\varepsilon\frac{L_x+\bar f_x}{1+L_x+\bar f_x}\\
&<\varepsilon.
\end{align*}
It follows that
\begin{align*}
\Pr{\Omega_n(x)>\varepsilon}
&\le
\Pr{2L_xD_n(x)+2\bar f_xV_n(x)>\varepsilon}\\
&\le
\Pr{D_n(x)>\delta}+\Pr{V_n(x)>\delta}\\
&\longrightarrow0.
\end{align*}
Therefore $\Omega_n(x)\xrightarrow{p}0$.  If the cell is training-frozen,
Proposition~\ref{proposition:leaf_tail_score_homogeneity} also gives
$0\le\Delta_n(x)\le\Omega_n(x)$, and hence
$\Delta_n(x)\xrightarrow{p}0$.
\end{proof}

For prefix cells, increasing $k_n(x)$ simultaneously makes the cell radius
$D_n(x)$ and the tail bound $b_n(x)$ no larger, but it also makes the cell
probability and its calibration count no larger.  Merging can restore local
sample size, at the cost of potentially enlarging the geometric and
leaf-value radii.  Before turning to calibration-covariate selection, we
record the training-frozen case as a direct consequence of the preceding
proposition.

\begin{corollary}[Training-frozen boosting cells]
\label{corollary:leaf_value_conditional_coverage}
Fix $x\in\mathcal{S}_X$ and let $A_n(x)$ be a
$\mathcal{F}_{\mathrm{tr},n}$-measurable boosting-induced cell with positive
conditional probability.  Define
\[
p_n(x)
\coloneqq
\Pr{X\in A_n(x)\mid\mathcal{F}_{\mathrm{tr},n}}.
\]
Under the sampling conditions of
Theorem~\ref{theorem:conditional_coverage}, suppose that the conditions of
Proposition~\ref{proposition:boosting_score_homogeneity} hold and that
\[
n_{\cal,n}p_n(x)\xrightarrow{p}\infty.
\]
Then
\[
\Pr{Y\in\widehat C_n(x)
\mid X=x,\mathcal{F}_{\mathrm{tr},n},\cD_{\cal,n}}
\xrightarrow{p}1-\alpha.
\]
\end{corollary}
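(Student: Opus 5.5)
The corollary follows by checking the two hypotheses of Theorem~\ref{theorem:conditional_coverage} for the training-frozen cell $A_n(x)$ and then invoking that theorem. Its sampling conditions are assumed directly. Its count condition $n_{\cal,n}p_n(x)\xrightarrow{p}\infty$ is also an explicit hypothesis of the corollary. What remains is the homogeneity condition $\Delta_n(x)\xrightarrow{p}0$ and the continuity of the conditional score distribution in $A_n(x)$.

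\textbf{Homogeneity.} I would use Proposition~\ref{proposition:boosting_score_homogeneity}. Its hypotheses hold by assumption: Assumption~\ref{assumption:conditional_response_regularity}, Assumption~\ref{assumption:shrinking_cell_diameter}, and either case (i) (prefix cell with vanishing boosting tail) or case (ii) (merged cell with vanishing leaf-value radius). Since the cell is $\mathcal F_{\mathrm{tr},n}$-measurable with positive conditional probability, the last clause of that proposition gives $\Delta_n(x)\le\Omega_n(x)\xrightarrow{p}0$. So the homogeneity hypothesis holds in either case.

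\textbf{Continuity.} This is the only step that needs a small additional argument. Theorem~\ref{theorem:conditional_coverage} requires $F_{n,A_n(x)}$ to be continuous, and the corollary does not state this explicitly. I would derive it from Assumption~\ref{assumption:conditional_response_regularity}. For each $z$ in the cell, $Y\mid X=z$ has a density bounded by $\bar f_x$, so
\[
F_{n,z}(t)=H_{n,z}(g_n(z)+t)-H_{n,z}(g_n(z)-t)
\]
is Lipschitz in $t$ with constant $2\bar f_x$. The cell CDF is the conditional average
\[
F_{n,A_n(x)}(t)=\E{F_{n,X}(t)\mid\mathcal F_{\mathrm{tr},n},X\in A_n(x)},
\]
which is the same representation used in the proof of Proposition~\ref{proposition:leaf_tail_score_homogeneity}. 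An average of $2\bar f_x$-Lipschitz functions is again $2\bar f_x$-Lipschitz, so $F_{n,A_n(x)}$ is continuous.

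The one subtlety is that Assumption~\ref{assumption:conditional_response_regularity} bounds densities only for $z\in A_n(x)\cap\mathcal S_X$. This is enough, because $X$ lies in $\mathcal S_X$ almost surely, so the conditional average only involves such $z$.

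With all hypotheses verified, Theorem~\ref{theorem:conditional_coverage} yields the stated convergence of the realized pointwise coverage to $1-\alpha$.
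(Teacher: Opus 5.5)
Your proposal is correct and matches the paper's proof: $\Delta_n(x)\xrightarrow{p}0$ comes from the final clause of Proposition~\ref{proposition:boosting_score_homogeneity}, the count condition is assumed, and continuity of $F_{n,A_n(x)}$ comes from averaging the $2\bar f_x$-Lipschitz pointwise score CDFs, after which Theorem~\ref{theorem:conditional_coverage} applies. Your remark that the density bound is only needed on $A_n(x)\cap\mathcal S_X$, because $X\in\mathcal S_X$ almost surely, spells out a point the paper leaves implicit.
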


\begin{proof}
Proposition~\ref{proposition:boosting_score_homogeneity} gives
$\Delta_n(x)\xrightarrow{p}0$, while the displayed condition gives the
required divergence of the expected local calibration size.

The density condition in
Assumption~\ref{assumption:conditional_response_regularity} also implies, for
$s,t\ge0$ and every $z\in A_n(x)\cap\mathcal S_X$,
\[
|F_{n,z}(t)-F_{n,z}(s)|
\le2\bar f_x|t-s|.
\]
Averaging this inequality conditionally on $X\in A_n(x)$ gives
\[
|F_{n,A_n(x)}(t)-F_{n,A_n(x)}(s)|
\le2\bar f_x|t-s|,
\]
so the cell-level score CDF is continuous.  All conditions of
Theorem~\ref{theorem:conditional_coverage} now hold, which proves the result.
\end{proof}

\subsubsection{Partitions Selected from Calibration Covariates}

Here we need to verify the two conditions of
Theorem~\ref{theorem:calibration_covariate_adaptive}: the uniform discrepancy
$\Omega_n(x)$ must vanish and the realized local calibration size
\[
M_n(x)
\coloneqq
\sum_{i=1}^{n_{\cal,n}}\Ind{X_i\in A_n(x)}
\]
must diverge.  Because the selected cell depends on the calibration
covariates, its realized count is controlled directly rather than through a
binomial argument.

\begin{assumption}[Diverging local calibration size]
\label{assumption:diverging_local_calibration_size}
There exists a deterministic sequence \(m_n\to\infty\) such that
\[
\Pr{M_n(x)\ge m_n}\longrightarrow1.
\]
The additional scaling \(m_n/n_{\cal,n}\to0\) is not required for coverage,
but is a natural compatibility condition if the selected cells are also to
shrink.
\end{assumption}

If the algorithm enforces at least $N_{\min,n}$ observations in every final
cell, this assumption holds with $m_n=N_{\min,n}$ whenever
$N_{\min,n}\to\infty$.

\begin{corollary}[Calibration-covariate-adaptive boosting cells]
\label{corollary:adaptive_leaf_value_conditional_coverage}
Fix $x\in\mathcal{S}_X$ and let $A_n(x)$ be the cell of a
$\mathfrak G_n$-measurable, response-blind boosting-induced partition. Under
the sampling conditions of
Theorem~\ref{theorem:calibration_covariate_adaptive}, suppose that
Assumptions~\ref{assumption:conditional_response_regularity},
\ref{assumption:shrinking_cell_diameter},
and \ref{assumption:diverging_local_calibration_size} hold.  Suppose also
that either $A_n(x)$ is a prefix cell satisfying
Assumption~\ref{assumption:vanishing_boosting_tail}, or it is a merged cell
satisfying Assumption~\ref{assumption:vanishing_leaf_value_radius}.  Then
\[
\operatorname{Cov}_{n,x}(\widehat C_n)
\xrightarrow{p}1-\alpha.
\]
\end{corollary}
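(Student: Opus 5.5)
The plan is to reduce the statement to Theorem~\ref{theorem:calibration_covariate_adaptive}, which gives the asymptotic conclusion once $M_n(x)\xrightarrow{p}\infty$ and $\Omega_n(x)\xrightarrow{p}0$. Its sampling and continuity hypotheses are assumed here. Continuity of each pointwise score CDF $F_{n,z}$ for $z\in A_n(x)\cap\mathcal S_X$ follows from the bounded-density part of Assumption~\ref{assumption:conditional_response_regularity}: as in the proof of Corollary~\ref{corollary:leaf_value_conditional_coverage},
\[
|F_{n,z}(t)-F_{n,z}(s)|\le 2\bar f_x|t-s|.
\]
The partition is $\mathfrak G_n$-measurable and response-blind by assumption, so the remaining work is to verify the two limits.

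For the homogeneity condition, I would invoke Proposition~\ref{proposition:boosting_score_homogeneity}. Its hypotheses are Assumptions~\ref{assumption:conditional_response_regularity} and \ref{assumption:shrinking_cell_diameter}, together with case (i) for a prefix cell under Assumption~\ref{assumption:vanishing_boosting_tail}, or case (ii) for a merged cell under Assumption~\ref{assumption:vanishing_leaf_value_radius}. These are exactly what the corollary assumes, so it yields $\Omega_n(x)\xrightarrow{p}0$. One point needs checking: that proposition's bound $\Omega_n(x)\le 2L_xD_n(x)+2\bar f_xV_n(x)$ was proved for a fixed realization of the cell. It therefore holds pointwise for a $\mathfrak G_n$-measurable cell. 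The only place training-measurability mattered was the $\Delta_n$ claim, which is not needed here.

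For the count condition, Assumption~\ref{assumption:diverging_local_calibration_size} gives $\Pr{M_n(x)\ge m_n}\to1$ with $m_n\to\infty$ deterministic. Fix $K$ and take $n$ large enough that $m_n>K$. Then
\[
\Pr{M_n(x)\le K}\le\Pr{M_n(x)<m_n}\to0,
\]
so $M_n(x)\xrightarrow{p}\infty$. This also ensures that the event $M_n(x)\ge1$ required by the theorem has probability tending to one.

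The final step is the only mildly delicate one, and it is converting the conditional PAC bound into convergence in probability. Fix $\varepsilon>0$ and choose $\delta$ small. On $\{M_n(x)\ge m_n\}$, the bound of Theorem~\ref{theorem:calibration_covariate_adaptive} holds with conditional probability at least $1-\delta$, and its right-hand side is
\[
\Omega_n(x)+\sqrt{\log(2/\delta)/(2m_n)}+2/m_n .
\]
Integrate over $\mathfrak G_n$ and split on the events $\{M_n(x)<m_n\}$ and $\{\Omega_n(x)>\varepsilon/2\}$. This gives
\[
\limsup_n \Pr{\bigl|\operatorname{Cov}_{n,x}(\widehat C_n)-(1-\alpha)\bigr|>\varepsilon}\le\delta .
\]
Since $\delta$ is arbitrary, the corollary follows. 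I expect this routine measurability and averaging step to be the only real obstacle, since everything else is a direct citation of earlier results.
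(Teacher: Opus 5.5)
Your proposal is correct and follows the paper's proof. You obtain $M_n(x)\xrightarrow{p}\infty$ from Assumption~\ref{assumption:diverging_local_calibration_size} by the same $m_n>K$ argument, get $\Omega_n(x)\xrightarrow{p}0$ from Proposition~\ref{proposition:boosting_score_homogeneity}, get continuity from the bounded-density bound, and conclude via Theorem~\ref{theorem:calibration_covariate_adaptive}. Your only addition is the explicit averaging step that turns the conditional PAC bound into convergence in probability, which the paper leaves implicit in the theorem's ``consequently'' clause; note that on $\{M_n(x)\ge m_n\}$ the right-hand side is at most, not equal to, your displayed quantity.
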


\begin{proof}
Fix $K>0$.  Since $m_n\to\infty$, we have $m_n>K$ for all sufficiently large
$n$.  For such $n$,
\[
\Pr{M_n(x)\le K}
\le
\Pr{M_n(x)<m_n}
\longrightarrow0
\]
by Assumption~\ref{assumption:diverging_local_calibration_size}.  Since this
holds for every fixed $K$, it follows that
$M_n(x)\xrightarrow{p}\infty$.

The remaining assumptions are the conditions of
Proposition~\ref{proposition:boosting_score_homogeneity}, which gives
\[
\Omega_n(x)\xrightarrow{p}0.
\]

The density condition in
Assumption~\ref{assumption:conditional_response_regularity} makes the
pointwise score CDFs continuous because, for every
$z\in A_n(x)\cap\mathcal S_X$ and $s,t\ge0$,
\[
|F_{n,z}(t)-F_{n,z}(s)|
\le2\bar f_x|t-s|.
\]
We have therefore verified $M_n(x)\xrightarrow{p}\infty$,
$\Omega_n(x)\xrightarrow{p}0$, and continuity.  The conclusion follows from
Theorem~\ref{theorem:calibration_covariate_adaptive}.
\end{proof}

\end{document}